\documentclass{article}
\usepackage[final]{neurips_2025}
\usepackage[utf8]{inputenc}
\usepackage[T1]{fontenc}
\usepackage{hyperref}
\usepackage{url}
\usepackage{booktabs}
\usepackage{amsmath, amsfonts}
\usepackage{nicefrac}
\usepackage{microtype}

\usepackage{amsthm}

\usepackage{stmaryrd}
\usepackage[noend]{algpseudocode} 
\usepackage{algorithm}
\usepackage{todonotes}

\def\gK{{\mathcal{K}}}
\def\gG{{\mathcal{G}}}
\def\mU{{\bm{U}}}
\def\mP{{\bm{P}}}
\def\mA{{\bm{A}}}
\def\mB{{\bm{B}}}
\def\mC{{\bm{C}}}
\newtheorem{theorem}{Theorem}
\newtheorem{proposition}[theorem]{Proposition}

\newcommand{\trans}[1]{\ensuremath{#1^{\top}}}
\newcommand{\R}{\mathbb{R}}                   


\newcommand{\norm}[1]{\left\lVert #1 \right\rVert}

\newcommand{\frob}[1]{\norm{#1}_{F}}
\newcommand{\normlp}[1]{\norm{#1}_{\mathsf{L}^2}}
\newcommand{\normsigma}[1]{\frob{#1 \Sigma^{1/2}}}
\newcommand{\EE}[2]{\mathbb{E}_{#1}{\left(#2\right)}}
\newcommand{\conv}[1]{\mathbf{Conv}_{#1}}

\newcommand{\mytensor}[1]{\mathcal{#1}}

\newcommand{\heightX}{H_y}
\newcommand{\widthX}{W_x}
\newcommand{\heightY}{H_y'}
\newcommand{\widthY}{W_x'}
\newcommand{\heightK}{H}
\newcommand{\widthK}{W}
\newcommand{\cout}{T}
\newcommand{\cin}{S}
\newcommand{\scalarp}[1]{\left<#1\right>}
\newcommand{\transp}[1]{\trans{\left(#1\right)}}
\newcommand{\foldKernel}[1]{#1_{(1)}}
\newcommand{\foldKernelp}[1]{\left(#1\right)_{(1)}}
\newcommand{\rvbmf}{R_{\text{VBMF}}}
\newcommand{\rmax}{R_{\max}}
\usepackage{amssymb}

\newcommand{\COMMENT}[1]{\hfill\(\triangleright\) #1}

\def\vecc{{\text{Vec}}}
\usepackage{mathtools}
\usepackage{xcolor}
\usepackage{bm}
\def\Id{{\mathrm{Id}}}
\usepackage{subfigure}

\usepackage{adjustbox}
\usepackage{makecell}

\newcommand{\myblue}[1]{\textcolor{blue}{#1}}
\newcommand{\myred}[1]{\textcolor{red}{#1}}
\newcommand{\mygreen}[1]{\textcolor{green}{#1}}

\title{Distribution-Aware Tensor Decomposition for Compression of Convolutional Neural Networks}

\usepackage{natbib}
\renewcommand{\cite}{\citep}

\author{%
  \textbf{Alper Kalle}\textsuperscript{$*\dagger$} \\
  \texttt{alper.kalle@cea.fr}\\ 
  \And
  \textbf{Theo Rudkiewicz}\textsuperscript{$*\ddagger$} \\
  \texttt{theo.rudkiewicz@inria.fr} \\
  \AND
  \textbf{Mohamed-Oumar Ouerfelli}\textsuperscript{$\dagger$} \\
  \texttt{mohamed-oumar.ouerfelli@cea.fr} \\
  \And
  \textbf{Mohamed Tamaazousti}\textsuperscript{$\dagger$} \\
  \texttt{mohamed.tamaazousti@cea.fr} \\
}

\begin{document}
\maketitle
\renewcommand{\thefootnote}{\fnsymbol{footnote}}
\footnotetext[1]{These authors contributed equally}
\footnotetext[2]{Université Paris-Saclay, CEA, List, F-91120, Palaiseau, France}
\footnotetext[3]{TAU team, LISN, Université Paris-Saclay, CNRS, Inria, 91405, Orsay, France}

\begin{abstract}
Neural networks are widely used for image–related tasks but typically demand considerable computing power. Once a network has been trained, however, its memory‑ and compute‑footprint can be reduced by compression. In this work, we focus on compression through tensorization and low‑rank representations. Whereas classical approaches search for a low‑rank approximation by minimizing an isotropic norm such as the Frobenius norm in weight‑space, we use data‑informed norms that measure the error in function space. Concretely, we minimize the change in the layer’s output distribution, which can be expressed as $\lVert (W - \widetilde{W}) \Sigma^{1/2}\rVert_F$ where $\Sigma^{1/2}$ is the square root of the covariance matrix of the layer’s input and $W$, $\widetilde{W}$ are the original and compressed weights. We propose new alternating least square algorithms for the two most common tensor decompositions (Tucker‑2 and CPD) that directly optimize the new norm. Unlike conventional compression pipelines, which almost always require post‑compression fine‑tuning, our data‑informed approach often achieves competitive accuracy without any fine‑tuning. We further show that the same covariance‑based norm can be transferred from one dataset to another with only a minor accuracy drop, enabling compression even when the original training dataset is unavailable.
Experiments on several CNN architectures (ResNet‑18/50, and GoogLeNet) and datasets (ImageNet, FGVC‑Aircraft, Cifar10, and Cifar100) confirm the advantages of the proposed method.
\end{abstract}

\section{Introduction}

For the past decade, neural networks have consistently outperformed other algorithms across a wide array of tasks particularly in computer vision, including character recognition, image classification, object detection, image segmentation, and depth estimation. Notably, even with the rise of other architectures, Convolutional Neural Networks (CNNs) continue to push the boundaries of performance in computer vision, with recent advancements demonstrating their sustained competitiveness for the state-of-the-art results~\cite{liuConvNet2020s2022, wooConvNeXtV2CoDesigning2023}.  A defining characteristic of these networks, in contrast to traditional algorithms (e.g., nearest neighbors, decision trees), is their substantial number of parameters. For instance, prominent architectures such as 
AlexNet~\cite{krizhevskyImageNetClassificationDeep2017}, 
VGG-16~\cite{simonyanVeryDeepConvolutional2014a}, 
ResNet~\cite{heDeepResidualLearning2016}
and GoogleNet~\cite{szegedyGoingDeeperConvolutions2015} 
possess tens of millions of trainable parameters.

Despite the performance benefits often associated with a large parameter count during training, opportunities exist for creating more compact representations of trained models. This is particularly critical for applications requiring efficient execution on resource-constrained embedded devices, such as on-device image classification for smartphones and systems for autonomous vehicles. Consequently, a significant body of research has emerged, focusing on techniques to reduce the parameter count in neural networks~\citep{chengRecentAdvancesEfficient2018}, including methods like quantization, pruning, knowledge distillation, and low-rank factorization.

Classical approaches to low-rank tensor approximation for network compression typically involve finding a compressed representation that minimizes an isotropic norm, such as the Frobenius norm, of the difference between the original and compressed weights. This minimization is performed directly in the weight space of the network. While mathematically convenient, such an approach does not explicitly consider the impact of weight changes on the network's functional behavior or its output given the data it processes. Consequently, these methods often lead to a notable drop in accuracy, necessitating a subsequent, computationally intensive fine-tuning stage to recover performance. This fine-tuning step typically requires access to the original training dataset, which may not always be available.

In this paper, we formalize a paradigm shift from classical weight-space error minimization for tensorized layers. Traditional tensorization approaches replace a computational block, typically a single layer $f_{\theta}$ parameterized by $\theta$, 
with a sequence of more compact layers—for instance, a composition $f_{A,B,C} :=g_A \circ h_B \circ i_C$,
where $A$, $B$, $C$ are the new, smaller parameter tensors. Classical methods then seek to find $A$, $B$, $C$ such that their effective combined parameters are close to the original $\theta$ i.e. $\frob{r(A, B, C) - \theta}$  is small (where $r$ is the operator that combines the low rank tensors). 
In contrast, the data-informed approach aims to ensure that the function implemented by the sequence of compressed layers, $f_{A,B,C}$, closely approximates the original layer's function, $f_{\theta}$, for the actual data the network encounters. This means we measure the error directly in the function space $\normlp{f_{A,B,C} - f_{\theta}}$, prioritizing the preservation of the layer's input-output behavior rather than just its parameter values. This make it a functional norm and distribution-aware norm. This leads to properly justify the "data norm" that was used by \citet{dentonExploitingLinearStructure2014} in the case of SVD decomposition.

To operationalize this concept, we develop novel Alternating Least Squares (ALS) algorithms specifically designed to optimize this new data-informed norm for two of the most widely used tensor decomposition formats: Tucker-2 and Canonical Polyadic Decomposition (CPD). A significant advantage of our approach is its ability to achieve competitive accuracy levels often without any post-compression fine-tuning, thereby streamlining the compression pipeline and reducing its dependency on extensive retraining.

Furthermore, we demonstrate a crucial property of our covariance-based norm: its transferability. We show that the input covariance statistics learned from one dataset can be effectively applied to compress models for different, related datasets with only a minor degradation in accuracy. This finding is particularly valuable as it opens up possibilities for compressing models even when the original training data is inaccessible due to privacy, proprietary restrictions, or other constraints.

Our contributions are:
\begin{enumerate}
    \item The use of distribution-informed norm for the neural network tensor decomposition.
    \item Development of new ALS algorithms for Tucker-2 and CPD tensor decompositions that directly optimize this distribution-informed norm.
    \item In depth comparison of tensor decomposition with Tucker and CP decompositions with and without the knowledge of data distribution, and also comparing two different approach: tensor deflation and CP-ALS algorithms with respect to distribution-informed norm.  
    \item Empirical evidence showing that our approach often achieves competitive accuracy without fine-tuning, and that the learned covariance-based norm can be transferred across datasets.
\end{enumerate}

We validate our proposed methods through comprehensive experiments on several prominent CNN architectures, including ResNet-18, ResNet-50, and GoogLeNet, across diverse benchmark datasets such as ImageNet, FGVC-Aircraft, Cifar10, and Cifar100. The results consistently underscore the advantages of our distribution-informed compression strategy.

\section{Related work}

Our work intersects three active threads in neural‑network compression: tensor‑based methods, functional norms, and data‑free compression techniques. 

\paragraph{Tensorization}
Tensor decomposition reduces both parameter count and computational cost by approximating weight tensors with low‑rank formats. Early efforts include \citet{dentonExploitingLinearStructure2014}; subsequent milestones span CP decomposition for CNNs~\citep{lebedevSpeedingupConvolutionalNeural2015}, Tensor‑Train for fully connected layers~\citep{novikovTensorizingNeuralNetworks2015}, Tucker decomposition~\citep{kimCompressionDeepConvolutional2016}, and Tensor‑Ring networks~\citep{zhaoLearningEfficientTensor2019}.
Most of these methods follow a two‑step pipeline: (i) compute a low‑rank approximation, then (ii) fine‑tune to restore accuracy.  Several studies instead alternate tensorization with fine‑tuning—e.g., the tensor‑power method for successive layer compression~\citep{astridCpdecompositionTensorPower2017} and the MUSCO pipeline, which repeatedly compresses and fine‑tunes~\citep{gusakAutomatedMultistageCompression2019}.
Rank selection has also been explored: variational Bayesian matrix factorization (VBMF) offers a data‑free criterion, applied to Tucker layers~\citep{kimCompressionDeepConvolutional2016} and later to CP layers~\citep{astridRankSelectionCPdecomposed2018}, building on the analytic VBMF solution of \citet{nakajimaGlobalAnalyticSolution2010}.
\paragraph{Functional norms}
Most of the above techniques minimise error in \emph{weight space}, which is not directly linked to a network’s functional behaviour. 
Functional norms instead measure distances between functions. They underpin natural‑gradient optimisation~\citep{ollivierTrueAsymptoticNatural2017, schwenckeANaGRAMNaturalGradient2024} and can guide network growth~\citep{verbockhavenGrowingTinyNetworks2024}.  In compression, alternative norms are less common; one example is \citet{lohitModelCompressionUsing2022}, who employ an optimal‑transport loss during distillation. Another important example is \citet{dentonExploitingLinearStructure2014} who propose the use of two non-isotropic norms: the Mahalanobis distance and the data distance. The latest was not presented as a functional norm but, as we will show later, their data norm can in fact be seen as the $L2$ functional norm. For this data distance, they proposed replacing the approximation criterion from the Frobenius norm with a data covariance distance metric, which accounts for the empirical covariance of the input layer. They recommended using this criterion alongside either the SVD method or a greedy CP algorithm, which iteratively computes rank-one tensors for approximate CP decomposition and conducted experiments for this distance using the SVD method. Furthermore, to the best of our knowledge, we are the first to conduct multiple experiments on several classical CNNs across different datasets to assess the benefits of this new norm. Finally, we also expand the analysis to evaluate the transferability of this norm between different datasets.


\paragraph{Data‑free compression}
Several mainstream compression techniques now operate without access to the original training data.  Data‑free knowledge distillation was introduced by \citet{lopesDataFreeKnowledgeDistillation2017}.  Zero‑shot quantization approaches such as ZeroQ~\citep{caiZeroQNovelZero2020} and DFQ~\citep{nagelDataFreeQuantizationWeight2019} remove the need for calibration data.  For pruning, \citet{srinivasDatafreeParameterPruning2015} merge redundant neurons, \citet{tangDataFreeNetworkPruning2021a} prune using synthetically generated data, and SynFlow prunes networks even before training~\citep{tanakaPruningNeuralNetworks2020a}. 
To the best of our knowledge, no existing work tackles the data‑free high order tensor approximation problem. We address this by evaluating the use of a generic distinct dataset to the one used during model training.

\paragraph{Alternative Norms}

Defining an importance score, or norm, is central to structured network compression. Prevailing methods rely on simple weight magnitude or more complex, loss-based metrics derived from the Hessian Matrix~\citep{lecun1990optimal} or Fisher Information Matrix~\citep{tu2016reducing}. Our work explores an alternative based on activation statistics. Specifically, our distribution-aware norm uses the activation covariance matrix to model the complete second-order statistical structure of feature maps. This provides a richer measure of importance than common activation-based heuristics, which are often limited to first-order statistics like mean magnitude~\citep{rhu2018compressing}. By capturing inter-channel dependencies, our norm offers a more holistic criterion for guiding the compression.




\section{Background on compression by tensor decomposition}\label{sec:background}

The general principle of tensor decomposition is to replace a part of the neural network by a lighter one. For example if we have a neural network $f = q \circ l \circ p$ where $q$ and $p$ are two neural networks and $l$ is one layer of the neural network, we can replace $l$ by a new squence of layers $l' := l_1 \circ l_2 \circ \dots \circ l_n$. The new neural network is then $f' := q \circ l' \circ p$. The goal of the tensor decomposition is to find the best $l'$ such that $f$ and $f'$ are as close as possible. In most of the case the layer $l$ is either a fully connected layer or a convolutional layer. In the case of a fully connected layer, the tensor decomposition is done through the singular value decomposition (SVD) and lead to replace the layer by a couple of fully connected layers. In the case of a convolutional layer, the tensor decomposition can be done through the CP decomposition or the Tucker decomposition as described in the next section.

\paragraph{Convolution}

A convolution $\conv{\mathcal{K}}$ parameterized by a tensor $\mathcal{K}$ of size $(\cout, \cin, \heightK, \widthK)$ is a function from the space of images with $\cin$ channels like $\mathcal{X} \in \R^{\cin \times \heightX \times \widthX}$ where $\heightX$, $\widthX$ represent the height and width of the image to the space of images with $\cout$ channels like $\mathcal{Y} \in \R^{\cout \times \heightY \times \widthY}$. 
The convolution layer has $\cin \heightK \widthK \cout$ parameters and the cost of the convolution operation given above is $(\cin \heightK \widthK \cout) (\heightY \widthY)$ additions-multiplications.

By decomposing the kernel tensor we can replace the convolution by a series of smaller layers in terms parameters and computational cost. In the next sections, we present two possible decompositions that can be seen as generalizations of singular value decomposition (SVD) to the tensor case.


\subsection{CP Decomposition for Convolutional Layer Compression}

The CANDECOMP/PARAFAC (CP) decomposition~\citep{hitchcockExpressionTensorPolyadic1927} represents a tensor as sum of rank 1 tensors, where the rank 1 tensors are them self decomposed as the outer product of vectors. For a rank $R$, CP decomposition of a kernel tensor $\mathcal{K} \in \R^{T \times S \times H \times W}$ is given by the following formula:
\begin{align}
    \widetilde{\mathcal{K}} = \sum_{r = 1}^{R} 
    \mU_{r}^{(T)} \otimes \mU_{r}^{(S)} \otimes \mU_{r}^{(H)} \otimes \mU_{r}^{(W)}
    \label{eq:cp_deomposition}
\end{align}
where 
$\mU_{r}^{(n)} \in \R^{n}$ for $n\in [T,S,H,W]$ are the equivalent of the singular vectors and $\otimes$ denotes the outer product. 

This decomposition leads as suggested by \citet{lebedevSpeedingupConvolutionalNeural2015} to replace the convolution by a series of four layers: a $1\times 1$ convolution parameterized by $(\mU_{r}^{(S)})_r$, a $\heightK\times 1$ convolution parameterized by $(\mU_{r}^{(H)})_r$, a $1 \times \widthK$ convolution parameterized by $(\mU_{r}^{(W)})_r$ and a $1\times 1$ convolution parameterized by $(\mU_{r}^{(T)})_r$. The number of parameters of the CP decomposition is $R \times (\cout + \cin + \heightK + \widthK)$ which for a small rank $R$ is smaller than the number of parameters of the original convolutional layer; the reduction factor is the same for the computational cost.

To choose a rank for the decompostion without reliying on trial and errors, we use the VBMF (Variational Bayesian Matrix Factorization) algorithm as suggested by \citet{astridRankSelectionCPdecomposed2018}. We note $\rvbmf$ the maximum of ranks computed through application of VBMF to the each possible unfolding of the kernel tensor and $\rmax$ an upper bound of the rank with $\rmax := \frac{TSHW}{\max\{T,S,H,W\}}$. We use a rank $R_{\alpha}$ that is a linear combination of those ranks determined by a parameter $\alpha$ that we call VBMF ratio:
\begin{equation}\label{eq:rankalpha}
    R_{\alpha} = \rvbmf + (1 - \alpha) (\rmax - \rvbmf).
\end{equation} 

\subsection{Tucker Decomposition for Convolutional Layer Compression}

The Tucker decomposition~\citep{tuckerMathematicalNotesThreemode1966} countrary to the CP decomposition use a different number of eigenvectors for each mode. In the case of convolutional layers, as the two mode corresponding to the kernel size ($\heightK$ and $\widthK$) are small, we do not decompose them which leads to the Tucker2 decomposition. For a kernel tensor $\mathcal{K} \in \R^{T \times S \times H \times W}$, the Tucker2 decomposition with rank $R_T$ and $R_S$ is \(
    \widetilde{\mathcal{K}} = \mathcal{G} \times_1 \mU^{(T)} \times_2  \mU^{(S)} \)\footnote{$\times_i$ indicates a product along the $i$th axis of the tensor $\mathcal{G} $}.

As suggested by \citet{kimCompressionDeepConvolutional2016}, this leads to replace the convolution by a series of two layers: a $1\times 1$ convolution parameterized by $\mU^{(T)}$, a full convolution parameterized by $\mathcal{G}$ and a $1\times 1$ convolution parameterized by $\mU^{(S)}$.


To select the ranks $R_T$ and $R_S$, we use the same method as for CP decomposition. We compute the ranks $R_{T-VBMF}$ and $R_{S-VBMF}$ by applying the VBMF algorithm to the unfolding matrices of sizes $T\times SHW$ and $S\times THW$. Then we compute the ranks $R_T$ and $R_S$ using the formula given in equation \eqref{eq:rankalpha} on the VBMF ranks $R_{T-VBMF}$ and $R_{S-VBMF}$, respectively (with the $\rmax$ being either $T$ or $S$).



\section{The Distribution-Aware Tensor Decomposition}

\subsection{Functional Metric for Network Compression}

In general when replacing the layer $l_{\theta}$ from $f = q \circ l_{\theta} \circ p$ to get $f' = q \circ l'_{\theta'} \circ p$ we don't want to retrain the network so we aim to have $f \approx f'$. Mathematically for a data distribution $\mathcal{D}$, for a classification task we want to have $\EE{x\sim \mathcal{D}}{d(f(x), f'(x))} \approx 0$ where $d$ is a way to measure the distance between two distributions like the Wasserstein distance or the KL divergence.

Most of the works use as proxy the Frobenius norm $\frob{\theta - \theta'}$ where $\theta$ are the parameters of the layer $l_{\theta}$ and $\theta'$ are the combined parameters of the new layer $l'_{\theta'}$ (this combination is different depending on the method (see previous section)).
Here we propose instead to minimize the functional norm $\normlp{l_{\theta} \circ p - l'_{\theta'} \circ p}$ where $p$ is the network before the layer $l_{\theta}$ where we take as measure the data distribution $\mathcal{D}$. Hence, we have:
\begin{equation}
    \normlp{l_{\theta} \circ p - l'_{\theta'} \circ p} = \sqrt{\EE{x \sim \mathcal{D}}{\frob{l_{\theta} \circ p(x) - l'_{\theta'} \circ p(x)}^2}}.
\end{equation}

To compute the $\normlp{\cdot}$ in the case where $l_{\theta}$ is a convolution we use the following result:

\begin{proposition}\label{prop:sigmanorm}
Consider a distribution $\mathcal{D}$, a partial neural network $p$, and two convolution $\conv{\mathcal{K}}$ and $\conv{\widetilde{\mathcal{K}}}$ parametrized by the kernel tensor $\mathcal{K}\in \R^{\cout \times \cin \times \heightK \times \widthK}$ and $\widetilde{\mathcal{K}}$. 
Under reasonable assumptions~\footnote{The reasonable assumptions are that the function $p$ is in $\mathsf{L}^2$ for the distribution $\mathcal{D}$. This is likely the case in our range of applications since $p$ is a neural network that is in most cases continuous and $\mathcal{D}$ can be considered as restricted to a compact set.}, we can define $\Sigma := \EE{x \sim \mathcal{D}}{u(p(x)) \trans{u(p(x))}}$ where $u$ is the unfolding operator~\footnote{\url{https://docs.pytorch.org/docs/stable/generated/torch.nn.Unfold.html}} that transforms the image $p(x)$ into a matrix that can be used to compute the convolution as a matrix product. We can also define $\Sigma^{1/2}$ the square root of $\Sigma$ such that $\Sigma^{1/2} \trans{(\Sigma^{1/2})} = \Sigma$.
Then, we have:
\begin{equation}\label{eq:sigmanorm}
    \normlp{\conv{\mathcal{K}} \circ p -  \conv{\widetilde{\mathcal{K}}} \circ p} = \frob{\foldKernelp{\mathcal{K} - \widetilde{\mathcal{K}}} \Sigma^{1/2}}
\end{equation} 
where $\foldKernelp{\cdot}$ is the reshaping of the convolution kernel into $(\cout, \cin \times \heightK \times \widthK)$.

See appendix~\ref{sec:proof_func_norm} for the proof.
\end{proposition}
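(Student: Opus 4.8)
The plan is to reduce the functional $\mathsf{L}^2$ norm to an ordinary matrix computation by exploiting the standard fact that a convolution becomes a matrix product after unfolding (the im2col identity). For a fixed input $x$, the unfolding operator $u$ turns the image $p(x)$ into a matrix $u(p(x))$ of size $(\cin\heightK\widthK) \times (\heightY\widthY)$ whose columns are the flattened receptive-field patches, and the convolution output, laid out as a $\cout \times (\heightY\widthY)$ matrix, is exactly $\foldKernelp{\mathcal{K}}\, u(p(x))$, where $\foldKernelp{\mathcal{K}}$ is the kernel reshaped to size $(\cout, \cin\heightK\widthK)$. Since both convolutions act on the \emph{same} unfolded input, linearity gives \[ \conv{\mathcal{K}}(p(x)) - \conv{\widetilde{\mathcal{K}}}(p(x)) = \foldKernelp{\mathcal{K} - \widetilde{\mathcal{K}}}\, u(p(x)), \] and the Frobenius norm is invariant under the reshaping between the tensor and matrix layouts of the output, so I may compute it on the right-hand side.

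Writing $\Delta := \foldKernelp{\mathcal{K} - \widetilde{\mathcal{K}}}$, the next step is to expand the squared Frobenius norm of $\Delta\, u(p(x))$ as a trace, \[ \frob{\Delta\, u(p(x))}^2 = \trace{\Delta\, u(p(x))\, \trans{u(p(x))}\, \trans{\Delta}}, \] and then to take the expectation over $x \sim \mathcal{D}$. The integrability hypothesis ($p \in \mathsf{L}^2$ for $\mathcal{D}$) ensures that $\Sigma = \EE{x\sim\mathcal{D}}{u(p(x))\,\trans{u(p(x))}}$ is finite, so by linearity of both the expectation and the trace the expectation passes inside to give \[ \normlp{\conv{\mathcal{K}}\circ p - \conv{\widetilde{\mathcal{K}}}\circ p}^2 = \trace{\Delta\, \Sigma\, \trans{\Delta}}. \]

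To conclude, I would substitute the factorization $\Sigma = \Sigma^{1/2}\trans{(\Sigma^{1/2})}$ and use cyclicity of the trace to recognize $\trace{\Delta\,\Sigma^{1/2}\trans{(\Sigma^{1/2})}\,\trans{\Delta}} = \frob{\Delta\,\Sigma^{1/2}}^2$; taking square roots then yields \eqref{eq:sigmanorm}. I expect the only genuine subtlety to lie in the first step: verifying that the PyTorch-style unfolding is consistent with the mode-1 reshaping $\foldKernelp{\cdot}$ so that the im2col identity holds \emph{exactly} (matching the $(\cin\heightK\widthK)\times(\heightY\widthY)$ shape of $u(p(x))$ against the $\cout\times(\cin\heightK\widthK)$ shape of $\Delta$), together with a clean justification that expectation commutes with the trace---a Fubini-type argument licensed by the $\mathsf{L}^2$ assumption. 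The remaining algebra is routine.
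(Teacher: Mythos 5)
Your proposal is correct and takes essentially the same route as the paper's proof: both rest on the im2col identity $\conv{\mathcal{K}}\circ p(x) = \foldKernelp{\mathcal{K}}\,u(p(x))$, a pointwise expansion of the squared Frobenius norm, linearity of the expectation to pull $\Sigma$ inside, and the factorization $\Sigma = \Sigma^{1/2}\trans{(\Sigma^{1/2})}$. The only difference is cosmetic: you write the quadratic expansion with traces, whereas the paper uses Frobenius inner products via the identity $\scalarp{u, vw} = \scalarp{u\trans{w}, v}$ — the two formulations are equivalent.
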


\paragraph{Frobenius norm}
Here we can notice that in the case where $\Sigma = I_n$, for example if we consider that $\mathcal{D}=\mathcal{N}(0,I)$, we recover the Frobenius norm over the parameters of the layer. Hence the functional norm is a generalization of the Frobenius norm over the parameters of the layer, suited for the case of
a non isotropic data distributions. In the following, we call the distribution-aware norm given in the Eq.~\ref{eq:sigmanorm} as Sigma norm.

\subsection{Decomposition Computation}

Although taking the input data distribution into account lead to better compression results, we cannot directly apply the highly efficient algorithms used for kernel approximation based on the Frobenius norm. In this paper, we introduce new efficient algorithms, equivalent to the widely used CP-ALS and Tucker-ALS methods for the Frobenius norm, that approximate the kernel using this new metric.

\subsubsection{Alternating Least Square Algorithm for CP with Distribution-Aware Norm}\label{sec:cpsigma}

The CP decomposition as presented in Eq.~\ref{eq:cp_deomposition} involves 4 terms. The main idea of the ALS algorithm is to minimize successively each of the terms and then to iterate this process. The minimization of each term is convex and can be done with a close formula using the pseudo-inverse. We details below how the minimization can be done with Sigma norm.

Firstly, we note the properties of vectorization of matrices which we will use to adopt the Sigma norm for ALS algorithm. Let $A\in \R^{m\times n}$ and $B\in \R^{n\times k}$ be two matrices. Then, we can state the followings, for all $i\in \llbracket 1,m \rrbracket, j\in \llbracket 1,n \rrbracket$,
\begin{align}
\text{Vec}(A)[m(j-1)+i] &= A[i,j], \ \text{and} \\
\text{Vec}(AB) = (\trans{B} \otimes \mathrm{Id}(m)) \; \text{Vec}(A) &= (\mathrm{Id}(k)\otimes A) \; \text{Vec}(B).
\end{align}
Given that the first unfolding of CP decomposition is
$\widetilde{\gK}_{(1)}=\mU^{(T)} \; \trans{({\mU^{S} \odot \mU^{H} \odot  \mU^{W}})}$ where $\odot$ denotes the Khatri–Rao product, we have:
\begin{align}\label{eqn:cp-als-sigmaeqns}
    \vecc(\widetilde{\gK}_{(1)} \Sigma^{1/2} ) &= \left(\trans{(\Sigma^{1/2})} \otimes \Id(T)\right)  \vecc(\widetilde{\gK}_{(1)}) \\
    &=\left(\trans{(\Sigma^{1/2})} \otimes \Id(T)\right)   \vecc(\mU^{(T)} \; \trans{({\mU^{S} \odot \mU^{H} \odot  \mU^{W}})})\\
    &=\left(\trans{(\Sigma^{1/2})} \otimes \Id(T)\right) \left((\mU^{(S)} \odot \mU^{(H)} \odot \mU^{(W)}) \otimes \Id(T) \right) \vecc (\mU^{(T)}). \end{align}
Thus, we can iterate over each factor matrices to minimize the error between the kernel tensor $\gK$ and its approximation $\widetilde{\gK}$ under Sigma norm. In other words, for the factor matrix $\mU^{(T)}$ we have the following minimization problem:
\begin{equation}
    \min_{U^{(T)}} \frob{\vecc(\gK \Sigma^{1/2} ) - \mP^{(T)} \vecc (\mU^{(T)})},
    \label{eq:UTPT}
\end{equation}
where $\mP^{(T)} = \left(\trans{(\Sigma^{1/2})} \otimes \Id(T)\right)  \left((\mU^{(S)} \odot \mU^{(H)} \odot \mU^{(W)}) \otimes \Id(T) \right)$, and similarly for the other matrices corresponding to the components $S, H, W$, which are detailed in the appendix~\ref{sec:full_equations_cp_als}.




Since the least square minimization problems have a closed form solution, we can deduce the factor matrix $U^{(T)}$ (and similarly the other factors) with the following formula:

\begin{equation}
    \vecc (U^{(T)}) \leftarrow (\mP^{(T)})^\dagger \vecc(\gK \Sigma^{1/2} )
\end{equation}

where $\mA^\dagger$ denotes the Moore–Penrose inverse of the matrix $\mA$. As a result, we present the full algorithm called CP-ALS-Sigma in algorithm~\ref{alg:cp-als-sigma} and we refer the appendix~\ref{apx:algodetails} for the practical use of the algorithm~\ref{alg:cp-als-sigma}.

\begin{algorithm}
  \caption{CP-ALS-Sigma}
  \label{alg:cp-als-sigma}
  \begin{algorithmic}[1]\footnotesize
    \Function{$[\mU^{(T)}, \mU^{(S)}, \mU^{(W)}, \mU^{(H)}, m]=$ CP-ALS-Sigma}{}
    \State Give initializations for matrices $\mU^{(T)}, \mU^{(S)}, \mU^{(H)}, \mU^{(W)}$ 
    \For{$n=1,\dots, m$}
      \State $\vecc (\mU^{(T)}) \leftarrow (\mP^{(T)})^\dagger \; \vecc({\gK} \Sigma^{1/2} )$ \COMMENT{update $\mU^{(T)}$ with a least square solution}
      \State $\vecc (\mU^{(S)}) \leftarrow (\mP^{(S)})^\dagger \; \vecc({\gK} \Sigma^{1/2} )$ \COMMENT{update $\mU^{(S)}$ with a least square solution}
      \State $\vecc (\mU^{(H)}) \leftarrow (\mP^{(H)})^\dagger \; \vecc({\gK} \Sigma^{1/2} )$ \COMMENT{update $\mU^{(H)}$ with a least square solution}
      \State $\vecc (\mU^{(W)}) \leftarrow (\mP^{(W)})^\dagger \; \vecc({\gK} \Sigma^{1/2} )$ \COMMENT{update $\mU^{(W)}$ with a least square solution}
      
    \EndFor
    \State \textbf{return} factor matrices $\mU^{(T)}, \mU^{(S)}, \mU^{(W)}, \mU^{(H)}$ 
    \EndFunction
  \end{algorithmic}
\end{algorithm}

\subsubsection{Alternating Least Square Algorithm for Tucker2 with Distribution-Aware Norm}\label{sec:tucker-sigma}

The principle of the ALS algorithm for Tucker2 is very similar to the CP case. We derive a quadratic minimization problem for each factor when both others are fixed. Those minimization problems can be solved in closed form using the matrix pseudo-inverse. In the end, this leads to the following algorithm~\ref{alg:tucker-als-sigma}. Full details can be found in appendix~\ref{sec:tucker2_als_details} and the practical use of the algorithm~\ref{alg:tucker-als-sigma} is provided in the appendix~\ref{apx:algodetails}.

\begin{algorithm}
  \caption{Tucker2-ALS-Sigma}
  \label{alg:tucker-als-sigma}
  \begin{algorithmic}[1]\footnotesize
    \Function{$[\mU^{(T)}, \mU^{(S)}, \gG, m]=$ Tucker2-ALS-Sigma}{}
    \State Give initializations for matrices $\mU^{(T)}, \mU^{(S)}$ 
    \For{$n=1,\dots, m$}
    \State $\vecc (\mU^{(T)}) \leftarrow (\mP^{(T)})^\dagger \; \vecc({\gK} \Sigma^{1/2} )$  \COMMENT{update $\mU^{(T)}$ with a least square solution}
      \State $\vecc (\mU^{(S)}) \leftarrow (\mP^{(S)})^\dagger \; \vecc({\gK} \Sigma^{1/2} )$ \COMMENT{update $\mU^{(S)}$ with a least square solution}
      \State $\vecc (\gG) \leftarrow (\mP^{(\gG)})^\dagger \; \vecc({\gK} \Sigma^{1/2} )$ \COMMENT{update $\gG$ with a least square solution}
      
    \EndFor
    \State \textbf{return} core tensor $\gG$ and factor matrices $\mU^{(T)}, \mU^{(S)}$ 
    \EndFunction
  \end{algorithmic}
\end{algorithm}

\section{Experimental Validation}

In our experiments, we investigate the convolutional neural network models Resnet18, Resnet50, and GoogLeNet (pretrained on ImageNet 1K) to assess the performance of the proposed algorithms CP-ALS-Sigma (Algorithm \ref{alg:cp-als-sigma}) and Tucker2-ALS-Sigma (Algorithm \ref{alg:tucker-als-sigma}). We do not compress the first convolutional layer for each model, as doing so degrades performance. Additionally, we used the Tensorly package~\cite{tensorly} to compute the standard CP and Tucker decompositions, referred to as CP-ALS and Tucker2-ALS, respectively. Some parts of the code were also adapted from the MUSCO library~\cite{gusakAutomatedMultistageCompression2019} to perform the compression on neural networks.

\subsection{Evaluation of the CP-ALS and Tensor Deflation with Distribution-Aware Norm}

An alternative to the full ALS method is the tensor deflation method. In details,
tensor deflation method suggests to 
 update iteratively $W^{(k+1)} \leftarrow W^{(k)}-\alpha_k\otimes \beta_k\otimes \gamma_k\otimes\delta_k$ where $\alpha_k\otimes \beta_k\otimes \gamma_k\otimes\delta_k$ is the rank-one approximation of $W^{(k)}$ such that the low rank approximation is given by $\widetilde{\mathcal{W}} = \sum_{k=1}^R \alpha_k\otimes \beta_k\otimes \gamma_k\otimes \delta_k$. 
We compare this greedy approach with our ALS algorithm that optimize all the ranks at the same time, in the case where we optimize the distribution-aware norm.
In Table~\ref{tab:greedy_table}, we show the reconstruction error for the distribution-aware norm ( $\normsigma{\foldKernelp{\mathcal{K} - \widetilde{\mathcal{K}}}}/\normsigma{\foldKernel{\mathcal{K}}}$ where $\mathcal{K}$ is the original tensor and $\widetilde{\mathcal{K}}$ is the decomposed one) when decomposing kernels of ResNet 18.
We observe a clear superiority of the full ALS algorithm in reconstruction error (Table~\ref{tab:greedy_table}) and in accuracy after compression (Table~\ref{tab:acc_greedy_table}).
\begin{table}[ht]
\caption{Relative reconstruction errors of Greedy Tensor Deflation(TD) and CP-ALS-Sigma algorithms applied to convolutional layers of ResNet18 where the ranks are estimated through VBMF with the ratio $\alpha=0.8$.}
\centering
\begin{tabular}{|l|c|c|c|c|}

\hline
\textbf{Conv Layer} & \textbf{Rank} & \textbf{Recons. Err.} &  \textbf{Recons. Err.} & \textbf{Improvement}\\
 & & \textbf{(Greedy TD)} & \textbf{(CP-ALS-Sigma)} & \textbf{(Greedy / ALS)}\\ 
\hline
Layer1.0.conv2 & $134$  & $0.195$ & \textbf{0.053} & 3.7\\
\hline
Layer2.0.conv1 & $140$ & $0.277$ & \textbf{0.118} & 2.3\\
\hline
Layer3.0.conv1 & $291$  & $0.217$ & \textbf{0.094}  & 2.3\\
\hline
Layer3.1.conv1 & $520$ & $0.207$ & \textbf{0.070} & 3.0\\
\hline
Layer4.1.conv1 & $1023$ & $0.159$ & \textbf{0.063} & 2.5\\
\hline
Layer4.1.conv2 & $1167$ & $0.292$ & \textbf{0.051} & 5.7\\
\hline 
\end{tabular}

\label{tab:greedy_table}
\end{table}

\begin{table}[ht]
\caption{Top-$1$ accuracies of the decomposed models obtained via Greedy Tensor Deflation (TD) and CP-ALS-Sigma algorithms with respect to distribution-aware norm applied to ResNet18 on ImageNet dataset with different compression rates.}
\centering
\begin{tabular}{|l|c|c|c|c|}

\hline
\textbf{VBMF Ratio} & \textbf{Compression Rate} &\textbf{Acc. (Greedy TD)} &  \textbf{Acc.(CP-ALS-Sigma)}\\
\hline
$0.8$ & $2.07$ & $35.2$ & $\mathbf{67.9}$ \\
\hline
$0.85$ & $2.53$ & $20.9$ & $\mathbf{66.5}$ \\
\hline
$0.9$ & $3.25$  & $5.1$ & $\mathbf{63.1}$ \\
\hline 
\end{tabular}

\label{tab:acc_greedy_table}
\end{table}

\subsection{Model Compression and Fine-Tuning with Limited Data Access}\label{subsec:partialdata}


In this section, we consider the scenario where only a limited amount of data is available, a common situation in many applications. Specifically, we consider the case where only 50,000 images from the ImageNet training set are available to estimate the matrix $\Sigma$ and to fine-tune the model. 

We compare the classification accuracy of the compressed model with Tucker2-ALS-Sigma, Tucker2-ALS algorithms for Frobenius norm and its fine-tuned version. For the network compressed using the standard ALS algorithm under the Frobenius norm, we fine-tune it with the Adam optimizer, selecting the optimal learning rate from the range  $10^{-5}$ to $10^{-10}$.
We test the performance of our method at different compression rates by varying the parameter $\alpha$ in the rank formula \eqref{eq:rankalpha}. For this, we selected the following values of $\alpha$: $[0.4, 0.45, 0.5, 0.55]$ for Resnet18, $[0.8, 0.9, 0.95, 1]$ for Resnet50, $[0.6, 0.7, 0.8]$ for GoogLeNet. 

As shown in Figure \ref{fig:t2sigma-pardata}, our results indicate that the Tucker2-ALS-Sigma algorithm consistently outperforms both the standard Tucker2-ALS algorithm and the fine-tuned compressed models obtained with Tucker2-ALS across all the neural networks evaluated. For additional results on the CP-ALS-Sigma algorithm, we refer to Appendix \ref{subsec:pardata_cpsigma_results}.

\begin{figure}[ht]
    \centering

    \includegraphics[width=0.99\linewidth]{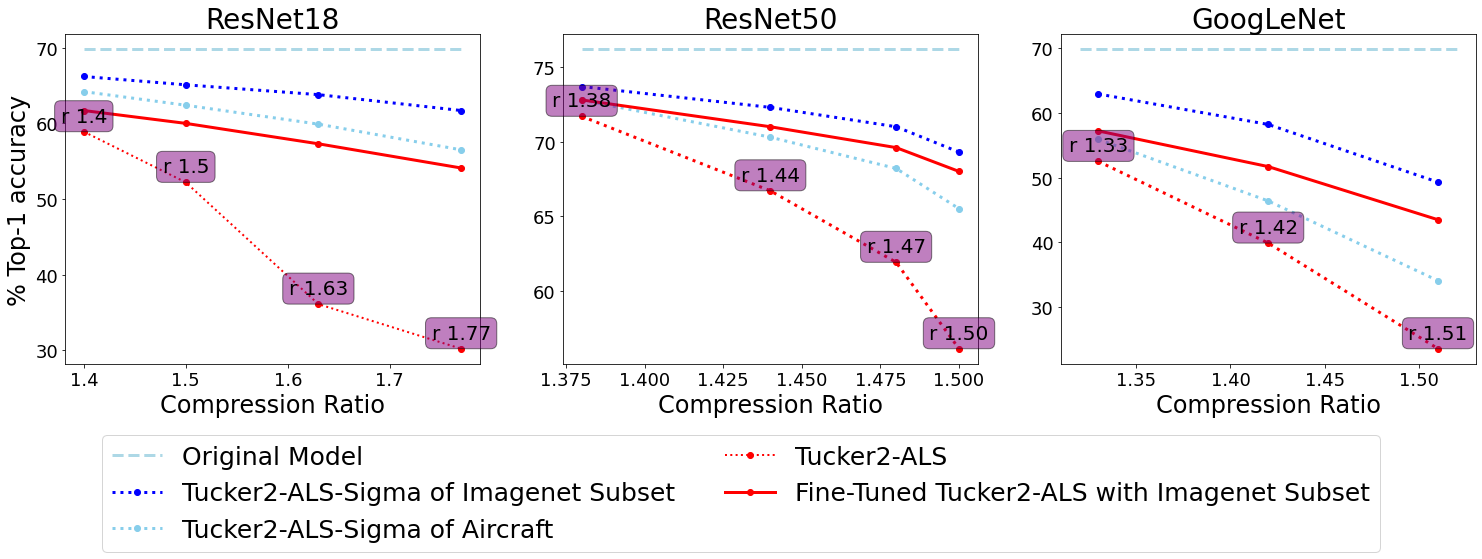}
    \caption{Accuracy comparison of decomposed models obtained with Tucker2-ALS-Sigma and Tucker2-ALS algorithms, also including fine-tuned decomposed model (with Tucker2-ALS algorithm) results where the fine-tuning done on the subset of ImageNet train dataset. ($rX$ denotes the compression ratio, calculated by dividing the number of parameters of the original model by that of the compressed model.)
    }
    \label{fig:t2sigma-pardata}
\end{figure}



\subsection{Impact of Dataset Changes on Proposed Algorithms}
\label{subsec:cifar10exper}
This section aims to assess how variations in the dataset affect the performance of the proposed CP-ALS-Sigma and Tucker2-ALS-Sigma algorithms. In particular, we demonstrate that the distribution-aware norm can be computed from a dataset different from the one used for pretraining, highlighting its transferability. To achieve this, we performed compression on ResNet18, ResNet50, and GoogLeNet models, all trained on the CIFAR-10 dataset, using our proposed ALS-Sigma algorithms (with the $\Sigma$ matrix obtained from different datasets) and the standard ALS algorithm, applied to both CP and Tucker decompositions. In addition, we fine-tuned the compressed models that are obtained using the standard ALS algorithms to compare their performance with the Sigma-based methods (without fine-tuning). Fine-tuning was conducted on the CIFAR-10 training dataset, employing the Adam optimizer and testing various learning rates from $10^{-3}$ to $10^{-7}$, selecting the best-performing learning rate. Results for the CP-ALS-Sigma algorithm can be found in Appendix \ref{subsec:c10_cpsigma_results}.


We compare Tucker2-ALS-Sigma with the standard Tucker2-ALS algorithm and the fine-tuned compressed model obtained from Tucker2-ALS. We chose $\alpha$ from $[0.5, 0.6, 0.7, 0.8, 0.9, 1]$ for Resnet18, $[0.9, 1, 1.1, 1.2, 1.3, 1.4]$ for Resnet50, 
$[0.6, 0.7, 0.8, 0.9, 0.95, 1]$ for GoogLeNet. Our results, shown in Figure \ref{fig:t2sigma-tl}, indicate that Tucker2-ALS-Sigma algorithm has better performance than the standard Tucker2-ALS algorithm across all models, even when the $\Sigma$ matrix is computed using different datasets, such as a subset of the ImageNet training set or the CIFAR-100 training set. 
Additionally, while Tucker2-ALS experiences rapid performance degradation as the compression rate increases, our method remains much more consistent. 
Furthermore, the Tucker2-ALS-Sigma algorithm produces results that are close to those of the fine-tuned compressed model obtained using the Tucker2-ALS algorithm. 
Notably, the performance of the Tucker2-ALS-Sigma algorithm on the CIFAR-100 dataset is comparable to that on the CIFAR-10 dataset, suggesting that our approach is not limited to using the original training dataset for compression with a distribution-aware norm. 
In fact, as illustrated in Figure~\ref{fig:t2sigma-tl}, the results for the Tucker2-ALS-Sigma algorithm with the CIFAR-100 dataset even outperform those with CIFAR-10 for the ResNet50 model. 
However, when the Sigma matrix is derived from a subset of the ImageNet training set, the performance of the Tucker2-ALS-Sigma algorithm is somewhat less effective compared to when it is computed on CIFAR-10 or CIFAR-100, likely due to differences in image resolution. We refer to the Appendix~\ref{sec:DatasetProperties} for additional experiments which investigate the effects of image resolution, dataset diversity, and the number of samples chosen for the $\Sigma$ matrix on the performance of proposed algorithms.

Figure~\ref{fig:t2sigma-pardata} demonstrates that the Tucker2-ALS-Sigma algorithm, when the Sigma matrix is derived from the FGVC-Aircraft training dataset, yields higher classification accuracy than the standard Tucker2-ALS method for all tested ImageNet models. Notably, in the case of ResNet18, the Tucker2-ALS-Sigma variant surpasses even the fine-tuned model obtained with the standard algorithm.



\begin{figure}[ht]
    \centering

    \includegraphics[width=0.99\linewidth]{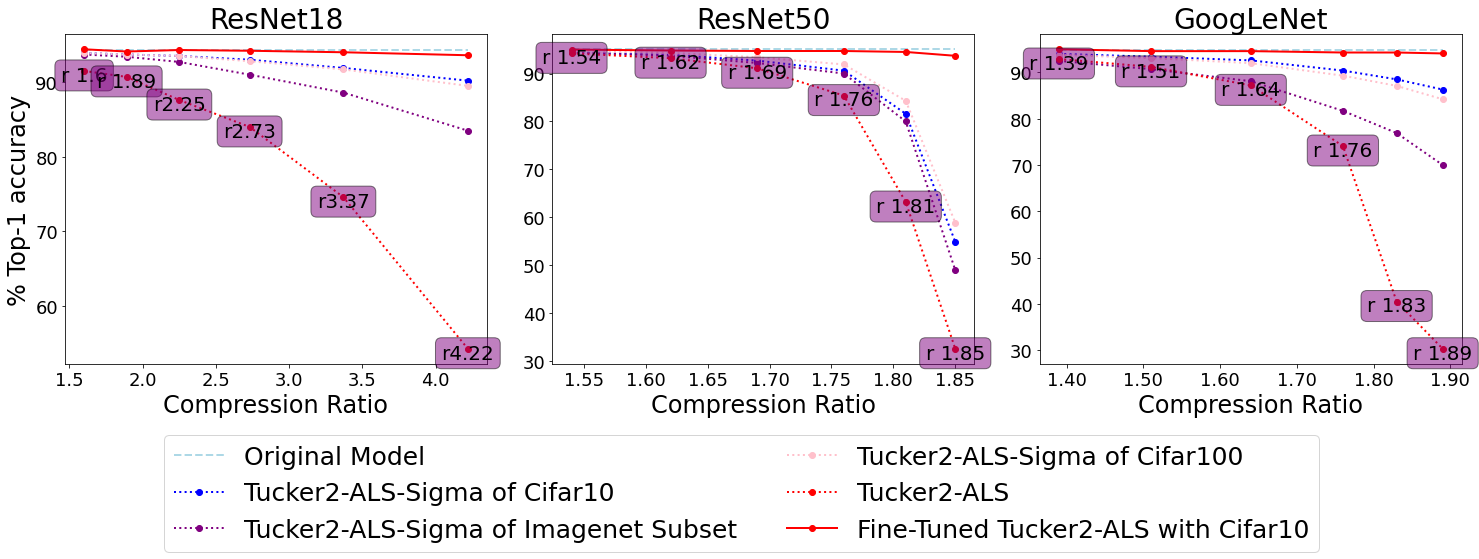}
    \caption{Accuracy comparison of Tucker2-ALS-Sigma and Tucker2-ALS algorithms including the fine-tuned model results after compression with Tucker2-ALS using Cifar10 dataset.
    }
    \label{fig:t2sigma-tl}
\end{figure}

\subsection{Evaluation Against Alternative Data-Informed Baselines}

To rigorously evaluate our approach, we designed and implemented two strong, data-aware baseline methods inspired by metrics from pruning and activation analysis: Fisher information~\citep{tu2016reducing} and activation sparsity~\citep{rhu2018compressing}. Each baseline guides the Tucker2 decomposition by solving a specific Weighted Alternating Least Squares (WALS) problem, enabling a direct comparison against our proposed method.

\paragraph{Fisher-Weighted Low-Rank Approximation (FW-LRA)}

This baseline adapts Fisher information, a well-established criterion from pruning literature, to guide decomposition. The method solves the weighted objective $\min ||\mathbf{H}_{\text{fisher}} \circledast (\mathcal{K} - \widetilde{\mathcal{K}})||_F^2$, where the weighting tensor $\mathbf{H}_{\text{fisher}}$ is derived from the diagonal of the Fisher Information Matrix. The intuition is to prioritize the preservation of weights that are most sensitive to the network's loss function.

\paragraph{Activation-Guided Low-Rank Approximation (AG-LRA)}

Inspired by sparsity-based compression techniques, this baseline prioritizes filters based on their activation magnitude. It solves the objective $\min ||\mathbf{H}_{\text{act}} \circledast (\mathcal{K} - \widetilde{\mathcal{K}})||_F^2$, where the weighting tensor $\mathbf{H}_{\text{act}}$ is calculated from the mean absolute activation of each output channel. This heuristic assumes that filters with higher average activation are more critical for the network's predictions.

\paragraph{Comparison with Baselines}
Tables~\ref{tab:fisher} and~\ref{tab:Activation} summarize the accuracy of our Tucker2-ALS-Sigma method compared to the baseline Tucker2-ALS and the two data-aware variants on GoogLeNet and ResNet18, respectively. Across all compression rates, our method consistently outperforms both FW-LRA and AG-LRA, as well as the vanilla baseline Tucker2-ALS, demonstrating the effectiveness of our distribution-aware norm.

\begin{table}[h!]
\centering
\caption{Accuracy comparison of different data-aware baseline methods and Tucker2-ALS-Sigma with Tucker2-ALS at various compression rates on GoogLeNet.}
\begin{tabular}{|c|c|c|c|c|}
\hline
\textbf{Compr. Rate} & \textbf{Tucker2-ALS} & \textbf{AG-LRA} & \textbf{FW-LRA} & \textbf{Our Method} \\
\hline
1.24 &  60.2 & 60.5 & 53.0 & \textbf{66.3} \\
\hline
1.33 &  52.5 & 54.9 & 35.8 & \textbf{64.2} \\
\hline
1.42 &  39.9 & 43.0 & 6.0  & \textbf{60.1} \\
\hline
1.51 &  23.6 & 25.1 & 1.1  & \textbf{52.2} \\
\hline
\end{tabular}
\label{tab:fisher}
\end{table}

\begin{table}[h!]
\centering
\caption{Accuracy comparison of different data-aware baseline methods and Tucker2-ALS-Sigma with Tucker2-ALS at various compression rates on Resnet18.}
\begin{tabular}{|c|c|c|c|c|}
\hline
 \textbf{Compr. Rate} & \textbf{Tucker2-ALS} & \textbf{AG-LRA} & \textbf{FW-LRA} & \textbf{Our Method} \\
\hline
1.4 &  58.9 & 54.9 & 36.7 & \textbf{66.8} \\
\hline
1.5 & 52.2 & 45.0 & 18.9 & \textbf{66.1} \\
\hline
1.63 & 36.1 & 37.5 & 12.2 & \textbf{64.9} \\
\hline
1.77  &  30.2 & 31.2 & 4.2  & \textbf{63.3} \\
\hline
\end{tabular}
\label{tab:Activation}
\end{table}

\section{Limitations and Future Work}
\paragraph{Complete functional norm}
A first limitation of our work is that when minimizing the reconstruction error in the layer $l$ we only take into account the first part of the network in the functional norm. Indeed, we want to minimize $\norm{f - f'}$ where $f = q \circ l_{\theta} \circ p$ and $f' = q \circ l'_{\theta'} \circ p$. To do so we minimize $\normlp{ l_{\theta} \circ p - l'_{\theta'} \circ p}$ as a proxy.  This is an important improvement compared to the standard proxy $\frob{\theta - \theta'}$ but this could be improved by taking the part $q$ of the network into account in the optimization process.
    
\paragraph{Anisotropic VBMF}

To choose the rank of the decomposed tensors we use the Variational Bayesian Matrix Factorization (VBMF) algorithm. This algorithm is based on the assumption that the matrix we try to factorize are perturbed by a Gaussian isotropic noise. Similarly to replacing the Frobenius norm by the functional norm, we could use a more general assumption on the noise. Doing so would require to design a new algorithm to compute the VBMF rank but could lead to a better rank choice.

\paragraph{Performance guarantee}
In data-free compression, we show that our algorithm improves performance for all targeted compression ratios. However, we are not able to give a performance guarantee of the compressed network without testing it on a dataset. Hence to select the rank we rely on a priori heuristic like the VBMF rank. We believe that a possible future work would be to use the reconstruction error in the functional norm to select the rank. Indeed, preliminary experiments shown in the appendix~\ref{sec:Correlation_Between_the_Reconstruction_Error_and_Accuracy} demonstrate that the reconstruction error in the functional norm is a way better indicator of the performance of the compressed network than the reconstruction error in the Frobenius norm.

\section{Conclusion}

In summary, we have shown that incorporating \emph{distribution-aware} (functional) norms into tensor‐based network compression leads to substantial performance gains. By deriving ALS procedures that directly optimize CP and Tucker decompositions under the Sigma norm, we achieve markedly lower reconstruction error and higher accuracy than with traditional Frobenius-based methods. In addition, CP-ALS-Sigma consistently surpasses greedy tensor deflation optimized with the same norm. The advantage of our approach becomes even more pronounced at higher compression rates, where standard methods degrade sharply. Remarkably, the distribution-aware decompositions recover almost all of the accuracy otherwise obtained by fine-tuning models compressed with conventional ALS, yet they require no additional training when the $\Sigma$ matrix is estimated from the data distribution. Even when that distribution is learned from a smaller, dataset such as FGVC-Aircraft, the benefits persist. Finally, when the original training data are unavailable, the $\Sigma$ matrix can be transferred from a related dataset, still delivering significant improvements—highlighting the practicality and robustness of our distribution-aware compression framework.

\section{Acknowledgements} We thank Clément Laroudie and Charles Villard for helpful discussions that contributed to this work. This publication was made possible by the use of the FactoryIA supercomputer, financially supported by the Ile-de-France Regional Council. This work is also supported by the PEPR-IA : ANR-23-PEIA-0010 and DeepGreen : ANR-23-DEGR-0001. 


\bibliographystyle{plainnat}
\bibliography{refs}  

@article{tensorly,
 author = {Jean Kossaifi and Yannis Panagakis and Anima Anandkumar and Maja Pantic},
title = {TensorLy: Tensor Learning in Python},
journal = {Journal of Machine Learning Research (JMLR)},
volume = {20},
number = {26},
year = {2019},
}

@inproceedings{astridCpdecompositionTensorPower2017,
  title = {Cp-Decomposition with Tensor Power Method for Convolutional Neural Networks Compression},
  booktitle = {2017 {{IEEE International Conference}} on {{Big Data}} and {{Smart Computing}} ({{BigComp}})},
  author = {Astrid, Marcella and Lee, Seung-Ik},
  year = {2017},
  pages = {115--118},
  publisher = {{IEEE}},
  keywords = {Lu},
  file = {/media/Data/Data/Documents/Zotero/storage/XCA5QHJQ/Astrid et Lee - 2017 - Cp-decomposition with tensor power method for conv.pdf;/media/Data/Data/Documents/Zotero/storage/NDD7H8I7/7881725.html}
}

@inproceedings{astridRankSelectionCPdecomposed2018,
	title = {Rank selection of {CP}-decomposed convolutional layers with variational {Bayesian} matrix factorization},
	url = {https://ieeexplore.ieee.org/document/8323750/},
	doi = {10.23919/ICACT.2018.8323750},
	abstract = {Convolutional Neural Networks (CNNs) is one of successful method in many areas such as image classification tasks. However, the amount of memory and computational cost needed for CNNs inference obstructs them to run efficiently in mobile devices because of memory and computational ability limitation. One of the method to compress CNNs is compressing the layers iteratively, i.e. by layer-by-layer compression and fine-tuning, with CP-decomposition in convolutional layers. To compress with CP-decomposition, rank selection is important. In the previous approach rank selection that is based on sensitivity of each layer, the average rank of the network was still arbitrarily selected. Additionally, the rank of all layers were decided before whole process of iterative compression, while the rank of a layer can be changed after fine-tuning. Therefore, this paper proposes selecting rank of each layer using Variational Bayesian Matrix Factorization (VBMF) which is more systematic than arbitrary approach. Furthermore, to consider the change of each layer's rank after fine-tuning of previous iteration, the method is applied just before compressing the target layer, i.e. after fine-tuning of the previous iteration. The results show better accuracy while also having more compression rate in AlexNet's convolutional layers compression.},
	urldate = {2025-11-10},
	booktitle = {2018 20th {International} {Conference} on {Advanced} {Communication} {Technology} ({ICACT})},
	author = {Astrid, Marcella and Lee, Seung-Ik and Seo, Beom-Su},
	month = feb,
	year = {2018},
	keywords = {Bayes methods, Compression, Computer vision, Convolutional neural networks, Convolutional Neural Networks, CP-decomposition, Image coding, Matrix decomposition, Rank Selection, Sensitivity, Tensile stress, Variational Bayesian Matrix Factorization},
	pages = {347--350},
	
}

@article{chengRecentAdvancesEfficient2018,
  title = {Recent Advances in Efficient Computation of Deep Convolutional Neural Networks},
  author = {Cheng, Jian and Wang, Pei-song and Li, Gang and Hu, Qing-hao and Lu, Han-qing},
  year = 2018,
  month = jan,
  journal = {Frontiers of Information Technology \& Electronic Engineering},
  volume = {19},
  number = {1},
  pages = {64--77},
  issn = {2095-9230},
  doi = {10.1631/FITEE.1700789},
  urldate = {2025-11-11},
  langid = {english}
}

@inproceedings{dentonExploitingLinearStructure2014,
    author    = {Denton, Remi and Wojciech Zaremba and Joan Bruna and Yann LeCun and Rob Fergus},
  title     = {Exploiting Linear Structure Within Convolutional Networks for Efficient Evaluation},
  booktitle = {Advances in Neural Information Processing Systems 27},
  year      = {2014},
  pages     = {1269--1277},
  publisher = {Curran Associates, Inc.}
}

@article{guiModelCompressionAdversarial2019,
  title = {Model Compression with Adversarial Robustness: {{A}} Unified Optimization Framework},
  shorttitle = {Model Compression with Adversarial Robustness},
  author = {Gui, Shupeng and Wang, Haotao and Yang, Haichuan and Yu, Chen and Wang, Zhangyang and Liu, Ji},
  year = {2019},
  journal = {Advances in Neural Information Processing Systems},
  volume = {32},
  file = {/media/Data/Data/Documents/Zotero/storage/DX9FSBLG/Gui et al. - 2019 - Model compression with adversarial robustness A u.pdf;/media/Data/Data/Documents/Zotero/storage/5633QD4H/2ca65f58e35d9ad45bf7f3ae5cfd08f1-Abstract.html}
}

@inproceedings{gusakAutomatedMultistageCompression2019,
  title = {Automated Multi-Stage Compression of Neural Networks},
  booktitle = {Proceedings of the {{IEEE}}/{{CVF International Conference}} on {{Computer Vision Workshops}}},
  author = {Gusak, Julia and Kholiavchenko, Maksym and Ponomarev, Evgeny and Markeeva, Larisa and Blagoveschensky, Philip and Cichocki, Andrzej and Oseledets, Ivan},
  year = {2019},
  pages = {0--0},
  file = {/media/Data/Data/Documents/Zotero/storage/LTVKAYG9/Gusak et al. - 2019 - Automated multi-stage compression of neural networ.pdf;/media/Data/Data/Documents/Zotero/storage/3S2CFMYU/Gusak_Automated_Multi-Stage_Compression_of_Neural_Networks_ICCVW_2019_paper.html}
}

@inproceedings{heDeepResidualLearning2016,
  title = {Deep {{Residual Learning}} for {{Image Recognition}}},
  booktitle = {CVPR},
  author = {He, Kaiming and Zhang, Xiangyu and Ren, Shaoqing and Sun, Jian},
  year = {2016},
  pages = {770--778},
  urldate = {2023-06-30},
  file = {/media/Data/Data/Documents/Zotero/storage/9IHGS7C9/He et al. - 2016 - Deep Residual Learning for Image Recognition.pdf}
}

@article{hitchcockExpressionTensorPolyadic1927,
  title = {The {{Expression}} of a {{Tensor}} or a {{Polyadic}} as a {{Sum}} of {{Products}}},
  author = {Hitchcock, Frank L.},
  year = {1927},
  journal = {Journal of Mathematics and Physics},
  volume = {6},
  number = {1-4},
  pages = {164--189},
  issn = {1467-9590},
  doi = {10.1002/sapm192761164},
  urldate = {2023-02-14},
  copyright = {\textcopyright{} 1927 by the Massachusetts Institute of Technology},
  langid = {english},
  file = {/media/Data/Data/Documents/Zotero/storage/Z25DKIEN/Hitchcock - 1927 - The Expression of a Tensor or a Polyadic as a Sum .pdf;/media/Data/Data/Documents/Zotero/storage/M8CBUUBW/sapm192761164.html}
}

@misc{kimCompressionDeepConvolutional2016,
  title = {Compression of {{Deep Convolutional Neural Networks}} for {{Fast}} and {{Low Power Mobile Applications}}},
  author = {Kim, Yong-Deok and Park, Eunhyeok and Yoo, Sungjoo and Choi, Taelim and Yang, Lu and Shin, Dongjun},
  year = {2016},
  month = feb,
  number = {arXiv:1511.06530},
  eprint = {1511.06530},
  primaryclass = {cs},
  publisher = {{arXiv}},
  doi = {10.48550/arXiv.1511.06530},
  urldate = {2023-02-08},
  abstract = {Although the latest high-end smartphone has powerful CPU and GPU, running deeper convolutional neural networks (CNNs) for complex tasks such as ImageNet classification on mobile devices is challenging. To deploy deep CNNs on mobile devices, we present a simple and effective scheme to compress the entire CNN, which we call one-shot whole network compression. The proposed scheme consists of three steps: (1) rank selection with variational Bayesian matrix factorization, (2) Tucker decomposition on kernel tensor, and (3) fine-tuning to recover accumulated loss of accuracy, and each step can be easily implemented using publicly available tools. We demonstrate the effectiveness of the proposed scheme by testing the performance of various compressed CNNs (AlexNet, VGGS, GoogLeNet, and VGG-16) on the smartphone. Significant reductions in model size, runtime, and energy consumption are obtained, at the cost of small loss in accuracy. In addition, we address the important implementation level issue on 1?1 convolution, which is a key operation of inception module of GoogLeNet as well as CNNs compressed by our proposed scheme.},
  archiveprefix = {arxiv},
  keywords = {Computer Science - Computer Vision and Pattern Recognition,Computer Science - Machine Learning,Lu},
  file = {/media/Data/Data/Documents/Zotero/storage/25EQKULP/Kim et al. - 2016 - Compression of Deep Convolutional Neural Networks .pdf;/media/Data/Data/Documents/Zotero/storage/PBXV949K/1511.html}
}

@article{krizhevskyImageNetClassificationDeep2017,
  title = {{{ImageNet}} Classification with Deep Convolutional Neural Networks},
  author = {Krizhevsky, Alex and Sutskever, Ilya and Hinton, Geoffrey E.},
  year = {2017},
  month = may,
  journal = {Communications of the ACM},
  volume = {60},
  number = {6},
  pages = {84--90},
  issn = {0001-0782},
  doi = {10.1145/3065386},
  urldate = {2023-02-21},
  abstract = {We trained a large, deep convolutional neural network to classify the 1.2 million high-resolution images in the ImageNet LSVRC-2010 contest into the 1000 different classes. On the test data, we achieved top-1 and top-5 error rates of 37.5\% and 17.0\%, respectively, which is considerably better than the previous state-of-the-art. The neural network, which has 60 million parameters and 650,000 neurons, consists of five convolutional layers, some of which are followed by max-pooling layers, and three fully connected layers with a final 1000-way softmax. To make training faster, we used non-saturating neurons and a very efficient GPU implementation of the convolution operation. To reduce overfitting in the fully connected layers we employed a recently developed regularization method called "dropout" that proved to be very effective. We also entered a variant of this model in the ILSVRC-2012 competition and achieved a winning top-5 test error rate of 15.3\%, compared to 26.2\% achieved by the second-best entry.},
  file = {/media/Data/Data/Documents/Zotero/storage/Q2ZQEB4S/Krizhevsky et al. - 2017 - ImageNet classification with deep convolutional ne.pdf}
}

@article{lebedevSpeedingupConvolutionalNeural2015,
  title={Speeding-up convolutional neural networks using fine-tuned cp-decomposition},
  author={Lebedev, Vadim and Ganin, Yaroslav and Rakhuba, Maksim and Oseledets, Ivan and Lempitsky, Victor},
  journal={ICLR},
  year={2015}
}

@article{nakajimaGlobalAnalyticSolution2010,
  title = {Global Analytic Solution for Variational {{Bayesian}} Matrix Factorization},
  author = {Nakajima, Shinichi and Sugiyama, Masashi and Tomioka, Ryota},
  year = {2010},
  journal = {Advances in Neural Information Processing Systems},
  volume = {23},
  file = {/media/Data/Data/Documents/Zotero/storage/DWNRLHSE/Nakajima et al. - 2010 - Global analytic solution for variational Bayesian .pdf;/media/Data/Data/Documents/Zotero/storage/5JV4WEL5/818f4654ed39a1c147d1e51a00ffb4cb-Abstract.html}
}

@inproceedings{novikovTensorizingNeuralNetworks2015,
  title = {Tensorizing {{Neural Networks}}},
  booktitle = {Advances in {{Neural Information Processing Systems}}},
  author = {Novikov, Alexander and Podoprikhin, Dmitrii and Osokin, Anton and Vetrov, Dmitry P},
  year = {2015},
  volume = {28},
  publisher = {{Curran Associates, Inc.}},
  urldate = {2022-11-25},
  abstract = {Deep neural networks currently demonstrate state-of-the-art performance in several domains.At the same time, models of this class are very demanding in terms of computational resources. In particular, a large amount of memory is required by commonly used fully-connected layers, making it hard to use the models on low-end devices and stopping the further increase of the model size. In this paper we convert the dense weight matrices of the fully-connected layers to the Tensor Train format such that the number of parameters is reduced by a huge factor and at the same time the expressive power of the layer is preserved.In particular, for the Very Deep VGG networks we report the compression factor of the dense weight matrix of a fully-connected layer up to 200000 times leading to the compression factor of the whole network up to 7 times.},
  file = {/media/Data/Data/Documents/Zotero/storage/BPWNTFFB/Novikov et al. - 2015 - Tensorizing Neural Networks.pdf}
}

@misc{simonyanVeryDeepConvolutional2014a,
  title = {Very {{Deep Convolutional Networks}} for {{Large-Scale Image Recognition}}},
  author = {Simonyan, Karen and Zisserman, Andrew},
  year = 2015,
  month = apr,
  number = {arXiv:1409.1556},
  eprint = {1409.1556},
  primaryclass = {cs},
  publisher = {arXiv},
  doi = {10.48550/arXiv.1409.1556},
  urldate = {2025-11-11},
  archiveprefix = {arXiv}
}

@inproceedings{szegedyGoingDeeperConvolutions2015,
  title = {Going {{Deeper With Convolutions}}},
  booktitle = {CVPR},
  author = {Szegedy, Christian and Liu, Wei and Jia, Yangqing and Sermanet, Pierre and Reed, Scott and Anguelov, Dragomir and Erhan, Dumitru and Vanhoucke, Vincent and Rabinovich, Andrew},
  year = {2015},
  pages = {1--9},
  urldate = {2023-06-30},
  file = {/media/Data/Data/Documents/Zotero/storage/UAAIY4BJ/Szegedy et al. - 2015 - Going Deeper With Convolutions.pdf}
}

@article{tuckerMathematicalNotesThreemode1966,
  title = {Some Mathematical Notes on Three-Mode Factor Analysis},
  author = {Tucker, Ledyard R.},
  year = {1966},
  month = sep,
  journal = {Psychometrika},
  volume = {31},
  number = {3},
  pages = {279--311},
  issn = {1860-0980},
  doi = {10.1007/BF02289464},
  urldate = {2023-02-14},
  abstract = {The model for three-mode factor analysis is discussed in terms of newer applications of mathematical processes including a type of matrix process termed the Kronecker product and the definition of combination variables. Three methods of analysis to a type of extension of principal components analysis are discussed. Methods II and III are applicable to analysis of data collected for a large sample of individuals. An extension of the model is described in which allowance is made for unique variance for each combination variable when the data are collected for a large sample of individuals.},
  langid = {english},
  keywords = {Mathematical Note,Principal Component Analysis,Public Policy,Statistical Theory,Unique Variance},
  file = {/media/Data/Data/Documents/Zotero/storage/5X7LXD8Z/Tucker - 1966 - Some mathematical notes on three-mode factor analy.pdf}
}

@inproceedings{liuConvNet2020s2022,
  title = {A {{ConvNet}} for the 2020s},
  booktitle = {Proceedings of the {{IEEE}}/{{CVF Conference}} on {{Computer Vision}} and {{Pattern Recognition}}},
  author = {Liu, Zhuang and Mao, Hanzi and Wu, Chao-Yuan and Feichtenhofer, Christoph and Darrell, Trevor and Xie, Saining},
  year = {2022},
  pages = {11976--11986},
  urldate = {2025-04-30},
  langid = {english},
  file = {/home/trudkiew/Documents/Zotero/storage/Y9284LC4/Liu et al. - 2022 - A ConvNet for the 2020s.pdf}
}

@inproceedings{wooConvNeXtV2CoDesigning2023,
  title = {{{ConvNeXt V2}}: {{Co-Designing}} and {{Scaling ConvNets With Masked Autoencoders}}},
  shorttitle = {{{ConvNeXt V2}}},
  booktitle = {Proceedings of the {{IEEE}}/{{CVF Conference}} on {{Computer Vision}} and {{Pattern Recognition}}},
  author = {Woo, Sanghyun and Debnath, Shoubhik and Hu, Ronghang and Chen, Xinlei and Liu, Zhuang and Kweon, In So and Xie, Saining},
  year = {2023},
  pages = {16133--16142},
  urldate = {2025-05-11},
  langid = {english},
  file = {/home/trudkiew/Documents/Zotero/storage/X86VZ7GL/Woo et al. - 2023 - ConvNeXt V2 Co-Designing and Scaling ConvNets With Masked Autoencoders.pdf}
}

@inproceedings{caiZeroQNovelZero2020,
  title = {{{ZeroQ}}: {{A Novel Zero Shot Quantization Framework}}},
  shorttitle = {{{ZeroQ}}},
  booktitle = {Proceedings of the {{IEEE}}/{{CVF Conference}} on {{Computer Vision}} and {{Pattern Recognition}}},
  author = {Cai, Yaohui and Yao, Zhewei and Dong, Zhen and Gholami, Amir and Mahoney, Michael W. and Keutzer, Kurt},
  year = {2020},
  pages = {13169--13178},
  urldate = {2025-05-12},
  file = {/home/trudkiew/Documents/Zotero/storage/RLF497KK/Cai et al. - 2020 - ZeroQ A Novel Zero Shot Quantization Framework.pdf}
}

@inproceedings{lohitModelCompressionUsing2022,
  title = {Model {{Compression Using Optimal Transport}}},
  booktitle = {Proceedings of the {{IEEE}}/{{CVF Winter Conference}} on {{Applications}} of {{Computer Vision}}},
  author = {Lohit, Suhas and Jones, Michael},
  year = {2022},
  pages = {2764--2773},
  urldate = {2025-05-12},
  langid = {english},
  file = {/home/trudkiew/Documents/Zotero/storage/MLY7WFSC/Lohit et Jones - 2022 - Model Compression Using Optimal Transport.pdf}
}

@misc{lopesDataFreeKnowledgeDistillation2017,
  title = {Data-{{Free Knowledge Distillation}} for {{Deep Neural Networks}}},
  author = {Lopes, Raphael Gontijo and Fenu, Stefano and Starner, Thad},
  year = {2017},
  month = nov,
  number = {arXiv:1710.07535},
  eprint = {1710.07535},
  primaryclass = {cs},
  publisher = {arXiv},
  doi = {10.48550/arXiv.1710.07535},
  urldate = {2025-05-12},
  abstract = {Recent advances in model compression have provided procedures for compressing large neural networks to a fraction of their original size while retaining most if not all of their accuracy. However, all of these approaches rely on access to the original training set, which might not always be possible if the network to be compressed was trained on a very large dataset, or on a dataset whose release poses privacy or safety concerns as may be the case for biometrics tasks. We present a method for data-free knowledge distillation, which is able to compress deep neural networks trained on large-scale datasets to a fraction of their size leveraging only some extra metadata to be provided with a pretrained model release. We also explore different kinds of metadata that can be used with our method, and discuss tradeoffs involved in using each of them.},
  archiveprefix = {arXiv},
  keywords = {Computer Science - Machine Learning},
  file = {/home/trudkiew/Documents/Zotero/storage/AEJKYE9H/Lopes et al. - 2017 - Data-Free Knowledge Distillation for Deep Neural Networks.pdf;/home/trudkiew/Documents/Zotero/storage/QMD9C9C5/1710.html}
}

@inproceedings{nagelDataFreeQuantizationWeight2019,
  title = {Data-{{Free Quantization Through Weight Equalization}} and {{Bias Correction}}},
  booktitle = {Proceedings of the {{IEEE}}/{{CVF International Conference}} on {{Computer Vision}}},
  author = {Nagel, Markus and van Baalen, Mart and Blankevoort, Tijmen and Welling, Max},
  year = {2019},
  pages = {1325--1334},
  urldate = {2025-05-12},
  file = {/home/trudkiew/Documents/Zotero/storage/TSR3PR4E/Nagel et al. - 2019 - Data-Free Quantization Through Weight Equalization and Bias Correction.pdf}
}

@misc{ollivierTrueAsymptoticNatural2017,
  title = {True {{Asymptotic Natural Gradient Optimization}}},
  author = {Ollivier, Yann},
  year = {2017},
  month = dec,
  number = {arXiv:1712.08449},
  eprint = {1712.08449},
  primaryclass = {cs, math, stat},
  publisher = {arXiv},
  doi = {10.48550/arXiv.1712.08449},
  urldate = {2024-08-30},
  abstract = {We introduce a simple algorithm, True Asymptotic Natural Gradient Optimization (TANGO), that converges to a true natural gradient descent in the limit of small learning rates, without explicit Fisher matrix estimation. For quadratic models the algorithm is also an instance of averaged stochastic gradient, where the parameter is a moving average of a "fast", constant-rate gradient descent. TANGO appears as a particular de-linearization of averaged SGD, and is sometimes quite different on non-quadratic models. This further connects averaged SGD and natural gradient, both of which are arguably optimal asymptotically. In large dimension, small learning rates will be required to approximate the natural gradient well. Still, this shows it is possible to get arbitrarily close to exact natural gradient descent with a lightweight algorithm.},
  archiveprefix = {arXiv},
  keywords = {Computer Science - Machine Learning,Mathematics - Optimization and Control,Statistics - Machine Learning},
  file = {/home/trudkiew/Documents/Zotero/storage/9ISXGW9F/Ollivier - 2017 - True Asymptotic Natural Gradient Optimization.pdf;/home/trudkiew/Documents/Zotero/storage/LZUF8QQC/1712.html}
}

@inproceedings{schwenckeANaGRAMNaturalGradient2024,
  title = {{{ANaGRAM}}: {{A Natural Gradient Relative}} to {{Adapted Model}} for Efficient {{PINNs}} Learning},
  shorttitle = {{{ANaGRAM}}},
  booktitle = {The {{Thirteenth International Conference}} on {{Learning Representations}}},
  author = {Schwencke, Nilo and Furtlehner, Cyril},
  year = {2024},
  month = oct,
  urldate = {2025-05-12},
  abstract = {In the recent years, Physics Informed Neural Networks (PINNs) have received strong interest as a method to solve PDE driven systems, in particular for data assimilation purpose. This method is still in its infancy, with many shortcomings and failures that remain not properly understood. In this paper we propose a natural gradient approach to PINNs which contributes to speed-up and improve the accuracy of the training. Based on an in depth analysis of the differential geometric structures of the problem, we come up with two distinct contributions: (i) a new natural gradient algorithm that scales as \${\textbackslash}min(P{\textasciicircum}2S, S{\textasciicircum}2P)\$, where \$P\$ is the number of parameters, and \$S\$ the batch size; (ii) a mathematically principled reformulation of the PINNs problem that allows the extension of natural gradient to it, with proved connections to Green's function theory.},
  langid = {english},
  file = {/home/trudkiew/Documents/Zotero/storage/58S9XTV9/Schwencke et Furtlehner - 2024 - ANaGRAM A Natural Gradient Relative to Adapted Model for efficient PINNs learning.pdf}
}

@inproceedings{srinivasDatafreeParameterPruning2015,
	address = {Swansea},
	title = {Data-free {Parameter} {Pruning} for {Deep} {Neural} {Networks}},
	isbn = {978-1-901725-53-7},
	url = {http://www.bmva.org/bmvc/2015/papers/paper031/index.html},
	doi = {10.5244/C.29.31},
	abstract = {Deep Neural nets (NNs) with millions of parameters are at the heart of many stateof-the-art computer vision systems today. However, recent works have shown that much smaller models can achieve similar levels of performance. In this work, we address the problem of pruning parameters in a trained NN model. Instead of removing individual weights one at a time as done in previous works, we remove one neuron at a time. We show how similar neurons are redundant, and propose a systematic way to remove them. Our experiments in pruning the densely connected layers show that we can remove upto 85\% of the total parameters in an MNIST-trained network, and about 35\% for AlexNet without signiﬁcantly affecting performance. Our method can be applied on top of most networks with a fully connected layer to give a smaller network.},
	language = {en},
	urldate = {2025-11-10},
	booktitle = {Procedings of the {British} {Machine} {Vision} {Conference} 2015},
	publisher = {British Machine Vision Association},
	author = {Srinivas, Suraj and Babu, R. Venkatesh},
	year = {2015},
	pages = {31.1--31.12}
}

@inproceedings{tanakaPruningNeuralNetworks2020a,
  title = {Pruning Neural Networks without Any Data by Iteratively Conserving Synaptic Flow},
  booktitle = {Advances in {{Neural Information Processing Systems}}},
  author = {Tanaka, Hidenori and Kunin, Daniel and Yamins, Daniel L and Ganguli, Surya},
  year = {2020},
  volume = {33},
  pages = {6377--6389},
  publisher = {Curran Associates, Inc.},
  urldate = {2025-05-12},
  abstract = {Pruning the parameters of deep neural networks has generated intense interest due to potential savings in time, memory and energy both during training and at test time. Recent works have identified, through an expensive sequence of training and pruning cycles, the existence of winning lottery tickets or sparse trainable subnetworks at initialization. This raises a foundational question: can we identify highly sparse trainable subnetworks at initialization, without ever training, or indeed without ever looking at the data? We provide an affirmative answer to this question through theory driven algorithm design. We first mathematically formulate and experimentally verify a conservation law that explains why existing gradient-based pruning algorithms at initialization suffer from layer-collapse, the premature pruning of an entire layer rendering a network untrainable. This theory also elucidates how layer-collapse can be entirely avoided, motivating a novel pruning algorithm Iterative Synaptic Flow Pruning (SynFlow). This algorithm can be interpreted as preserving the total flow of synaptic strengths through the network at initialization subject to a sparsity constraint. Notably, this algorithm makes no reference to the training data and consistently competes with or outperforms existing state-of-the-art pruning algorithms at initialization over a range of models (VGG and ResNet), datasets (CIFAR-10/100 and Tiny ImageNet), and sparsity constraints (up to 99.99 percent). Thus our data-agnostic pruning algorithm challenges the existing paradigm that, at initialization, data must be used to quantify which synapses are important.},
  file = {/home/trudkiew/Documents/Zotero/storage/A55QW7LR/Tanaka et al. - 2020 - Pruning neural networks without any data by iteratively conserving synaptic flow.pdf}
}

@inproceedings{tangDataFreeNetworkPruning2021a,
  title = {Data-{{Free Network Pruning}} for {{Model Compression}}},
  booktitle = {2021 {{IEEE International Symposium}} on {{Circuits}} and {{Systems}} ({{ISCAS}})},
  author = {Tang, Jialiang and Liu, Mingjin and Jiang, Ning and Cai, Huan and Yu, Wenxin and Zhou, Jinjia},
  year = {2021},
  month = may,
  pages = {1--5},
  issn = {2158-1525},
  doi = {10.1109/ISCAS51556.2021.9401109},
  urldate = {2025-05-12},
  abstract = {Convolutional neural networks(CNNs) are often over-parameterized and cannot apply to existing resource-limited artificial intelligence(AI) devices. Some methods are proposed to model compress the CNNs, but these methods are data-driven and often unable when lacking data. To solve this problem, in this paper, we propose a data-free model compression and acceleration method based on generative adversarial networks and network pruning(named DFNP), which can train a compact neural network only needs a pre-trained neural network. The DFNP consists of the source network, generator, and target network. First, the generator will generate the pseudo data under the supervise of the source network. Then the target network will get by pruning the source network and use these generated data for training. And the source network will transfer knowledge to the target network to promote the target network to achieve a similar performance of the source network. When the VGGNet- 19 is select as the source network, the target network trained by DFNP contains only 25\% parameters and 65\% calculations of the source network. Still, it retains 99.4\% accuracy on the CIFAR-10 dataset without any real data.},
  keywords = {Convolutional neural networks,Data models,Generative adversarial networks,Generators,Knowledge engineering,Neural networks,Training},
  file = {/home/trudkiew/Documents/Zotero/storage/PX4CXDKV/Tang et al. - 2021 - Data-Free Network Pruning for Model Compression.pdf}
}

@article{verbockhavenGrowingTinyNetworks2024,
  title = {Growing {{Tiny Networks}}: {{Spotting Expressivity Bottlenecks}} and {{Fixing Them Optimally}}},
  shorttitle = {Growing {{Tiny Networks}}},
  author = {Verbockhaven, Manon and Rudkiewicz, Th{\'e}o and Charpiat, Guillaume and Chevallier, Sylvain},
  year = {2024},
  month = jul,
  journal = {Transactions on Machine Learning Research},
  urldate = {2024-08-24},
  abstract = {Machine learning tasks are generally formulated as optimization problems, where one searches for an optimal function within a certain functional space. In practice, parameterized functional spaces are considered, in order to be able to perform gradient descent. Typically, a neural network architecture is chosen and fixed, and its parameters (connection weights) are optimized, yielding an architecture-dependent result. This way of proceeding however forces the evolution of the function during training to lie within the realm of what is expressible with the chosen architecture, and prevents any optimization across architectures. Costly architectural hyper-parameter optimization is often performed to compensate for this. Instead, we propose to adapt the architecture on the fly during training. We show that the information about desirable architectural changes, due to expressivity bottlenecks when attempting to follow the functional gradient, can be extracted from backpropagation. To do this, we propose a mathematical definition of expressivity bottlenecks, which enables us to detect, quantify and solve them while training, by adding suitable neurons. Thus, while the standard approach requires large networks, in terms of number of neurons per layer, for expressivity and optimization reasons, we are able to start with very small neural networks and let them grow appropriately. As a proof of concept, we show results{\textasciitilde}on the CIFAR dataset, matching large neural network accuracy, with competitive training time, while removing the need for standard architectural hyper-parameter search.},
  langid = {english},
  file = {/home/trudkiew/Documents/Zotero/storage/78FUG3Z2/Verbockhaven et al. - 2024 - Growing Tiny Networks Spotting Expressivity Bottl.pdf;/home/trudkiew/Documents/Zotero/storage/NB5C2GDS/Verbockhaven et al. - 2023 - Growing Tiny Networks Spotting Expressivity Bottl.pdf}
}

@inproceedings{zhaoLearningEfficientTensor2019,
  title = {Learning {{Efficient Tensor Representations}} with {{Ring-structured Networks}}},
  booktitle = {{{ICASSP}} 2019 - 2019 {{IEEE International Conference}} on {{Acoustics}}, {{Speech}} and {{Signal Processing}} ({{ICASSP}})},
  author = {Zhao, Qibin and Sugiyama, Masashi and Yuan, Longhao and Cichocki, Andrzej},
  year = {2019},
  month = may,
  pages = {8608--8612},
  issn = {2379-190X},
  doi = {10.1109/ICASSP.2019.8682231},
  abstract = {Tensor train decomposition is a powerful representation for high-order tensors, which has been successfully applied to various machine learning tasks in recent years. In this paper, we study a more generalized tensor decomposition with a ring-structured network by employing circular multilinear products over a sequence of lower-order core tensors. We refer to such tensor decomposition as tensor ring (TR) representation. Our goal is to introduce learning algorithms including sequential singular value decompositions and blockwise alternating least squares with adaptive tensor ranks. Experimental results demonstrate the effectiveness of the TR model and the learning algorithms. In particular, we show that the structure information and high-order correlations within a 2D image can be captured efficiently by employing an appropriate tensorization and TR decomposition.},
  keywords = {Adaptation models,Matrix decomposition,Merging,Signal processing algorithms,Singular value decomposition,Tensor decomposition,tensor network,tensor train decomposition,tensorization,Two dimensional displays},
  file = {/home/trudkiew/Documents/Zotero/storage/X7JN6L5J/Zhao et al. - 2019 - Learning Efficient Tensor Representations with Rin.pdf;/home/trudkiew/Documents/Zotero/storage/MI8KY9TM/8682231.html}
}

@article{cherniuk2024quantization,
  title = {Quantization {{Aware Factorization}} for {{Deep Neural Network Compression}}},
  author = {Cherniuk, Daria and Abukhovich, Stanislav and Phan, Anh-Huy and Oseledets, Ivan and Cichocki, Andrzej and Gusak, Julia},
  year = 2024,
  month = dec,
  journal = {Journal of Artificial Intelligence Research},
  volume = {81},
  pages = {973--988},
  issn = {1076-9757},
  doi = {10.1613/jair.1.16167},
  urldate = {2025-11-09},
  langid = {english},
}

@inproceedings{tu2016reducing,
  title = {Reducing the {{Model Order}} of {{Deep Neural Networks Using Information Theory}}},
  booktitle = {2016 {{IEEE Computer Society Annual Symposium}} on {{VLSI}} ({{ISVLSI}})},
  author = {Tu, Ming and Berisha, Visar and Cao, Yu and Seo, Jae-Sun},
  year = 2016,
  month = jul,
  pages = {93--98},
  issn = {2159-3477},
  doi = {10.1109/ISVLSI.2016.117},
  urldate = {2025-11-11}
}

@inproceedings{rhu2018compressing,
  title = {Compressing {{DMA Engine}}: {{Leveraging Activation Sparsity}} for {{Training Deep Neural Networks}}},
  shorttitle = {Compressing {{DMA Engine}}},
  booktitle = {2018 {{IEEE International Symposium}} on {{High Performance Computer Architecture}} ({{HPCA}})},
  author = {Rhu, Minsoo and O'Connor, Mike and Chatterjee, Niladrish and Pool, Jeff and Kwon, Youngeun and Keckler, Stephen W.},
  year = 2018,
  month = feb,
  pages = {78--91},
  issn = {2378-203X},
  doi = {10.1109/HPCA.2018.00017},
  urldate = {2025-11-11}
}

@inproceedings{lecun1990optimal,
	title = {Optimal {Brain} {Damage}},
	volume = {2},
	url = {https://proceedings.neurips.cc/paper_files/paper/1989/file/6c9882bbac1c7093bd25041881277658-Paper.pdf},
	booktitle = {Advances in {Neural} {Information} {Processing} {Systems}},
	publisher = {Morgan-Kaufmann},
	author = {LeCun, Yann and Denker, John and Solla, Sara},
	editor = {Touretzky, D.},
	year = {1989},
}

\appendix

\newpage
\section{Proof on functional norm}\label{sec:proof_func_norm}


\setcounter{theorem}{0}

\begin{proposition}
Consider a distribution $\mathcal{D}$, a partial neural network $p$, and two convolution $\conv{\mathcal{K}}$ and $\conv{\widetilde{\mathcal{K}}}$ parametrized by the kernel tensor $\mathcal{K}\in \R^{\cout \times \cin \times \heightK \times \widthK}$ and $\widetilde{\mathcal{K}}$. 
Under reasonable assumptions~\footnote{The reasonable assumptions are that the function $p$ is in $\mathsf{L}^2$ for the distribution $\mathcal{D}$. This is likely the case in our range of applications since $p$ is a neural network that is in most cases continuous and $\mathcal{D}$ can be considered as restricted to a compact set.}, we can define $\Sigma := \EE{x \sim \mathcal{D}}{u(p(x)) \trans{u(p(x))}}$ where $u$ is the unfolding operator~\footnote{\url{https://docs.pytorch.org/docs/stable/generated/torch.nn.Unfold.html}} that transforms the image $p(x)$ into a matrix that can be used to compute the convolution as a matrix product. We can also define $\Sigma^{1/2}$ the square root of $\Sigma$ such that $\Sigma^{1/2} \transp{\Sigma^{1/2}} = \Sigma$.
Then, we have:
\begin{equation}
    \normlp{\conv{\mathcal{K}} \circ p -  \conv{\widetilde{\mathcal{K}}} \circ p} = \frob{\foldKernelp{\mathcal{K} - \widetilde{\mathcal{K}}} \Sigma^{1/2}}
\end{equation} 
\end{proposition}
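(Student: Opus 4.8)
The plan is to reduce the functional $\mathsf{L}^2$ distance to a deterministic matrix identity by exploiting that a convolution is linear in its kernel and, after unfolding its input, is simply a matrix product. First I would expand the definition of the norm,
\begin{equation}
\normlp{\conv{\mathcal{K}} \circ p - \conv{\widetilde{\mathcal{K}}} \circ p}^2 = \EE{x \sim \mathcal{D}}{\frob{\conv{\mathcal{K}}(p(x)) - \conv{\widetilde{\mathcal{K}}}(p(x))}^2},
\end{equation}
noting that the $\mathsf{L}^2$ assumption on $p$ guarantees this expectation, and hence $\Sigma$, is finite and well defined. The central ingredient is the im2col identity: for any input image $\mathcal{X}$ one has $\conv{\mathcal{K}}(\mathcal{X}) = \foldKernel{\mathcal{K}}\, u(\mathcal{X})$, where $\foldKernel{\mathcal{K}}$ is the reshaping of the kernel into size $(T, S\times H\times W)$ and $u(\mathcal{X})$ collects the patches. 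By linearity in the kernel, the difference of the two convolutions applied to $p(x)$ collapses to the single matrix product $\foldKernelp{\mathcal{K} - \widetilde{\mathcal{K}}}\, u(p(x))$.

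Next I would turn the squared Frobenius norm of this product into a trace and interchange it with the expectation. Writing $M := \foldKernelp{\mathcal{K} - \widetilde{\mathcal{K}}}$ for brevity, the identity $\frob{A}^2 = \trace{A \trans{A}}$ gives $\frob{M\, u(p(x))}^2 = \trace{M\, u(p(x)) \trans{u(p(x))} \trans{M}}$. Using linearity of the trace and of the expectation, the expectation passes through the fixed matrices $M$ and $\trans{M}$ onto the patch products, producing exactly $\Sigma$:
\begin{equation}
\EE{x \sim \mathcal{D}}{\frob{M\, u(p(x))}^2} = \trace{M\, \EE{x \sim \mathcal{D}}{u(p(x)) \trans{u(p(x))}}\, \trans{M}} = \trace{M\, \Sigma\, \trans{M}}.
\end{equation}

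Finally I would substitute the factorization $\Sigma = \Sigma^{1/2}\transp{\Sigma^{1/2}}$ and regroup, so that $\trace{M \Sigma \trans{M}} = \trace{(M\Sigma^{1/2})\transp{M\Sigma^{1/2}}} = \frob{M \Sigma^{1/2}}^2$; taking square roots yields the claimed equality. The computation is short and largely routine once the setup is fixed; the only genuinely delicate point is the bookkeeping in the first step—verifying that the unfolding operator really linearizes the convolution into a matrix product with the stated reshaping $\foldKernelp{\cdot}$ and matching dimensions, and that the spatial (patch) axis is precisely the axis summed over, so that the Frobenius norm of the output image coincides with $\frob{M\, u(p(x))}$. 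After that identification, the passage of the expectation through the trace and the $\Sigma^{1/2}$ regrouping are immediate.
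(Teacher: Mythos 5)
Your proof is correct and takes essentially the same approach as the paper's: both rewrite the convolution via unfolding as $\foldKernel{\mathcal{K}}\,u(p(x))$, use linearity of the kernel and of the expectation to pull the average onto the patch Gram matrix $u(p(x))\trans{u(p(x))}$, producing $\Sigma$, and then regroup with $\Sigma = \Sigma^{1/2}\transp{\Sigma^{1/2}}$ to recover the Frobenius norm. The only cosmetic difference is that you carry out the middle step with traces ($\frob{A}^2 = \trace{A\trans{A}}$), whereas the paper uses the Frobenius inner product and the adjoint identity $\scalarp{u, vw} = \scalarp{u\trans{w}, v}$ --- the same manipulation in different notation.
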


\begin{proof}\label{proof:sigmanorm}
    \emph{Convolution as a matrix product}

    We can write the convolution as a matrix product. We first need to unfold the image $p(x)$ of size $\cin \times \heightX \times \widthX$ into a matrix $u(p(x))$ of size $(\cin \times \heightK \times \widthK, \heightY \times \widthY)$. In this matrix each line is a patch of the image $p(x)$ of size $(\cin \times \heightK \times \widthK)$. We can also reshape the kernel $\mathcal{K}$ of size $(\cout, \cin, \heightK, \widthK)$ into a matrix $\foldKernel{\mathcal{K}}$ of size $(\cout, \cin \times \heightK \times \widthK)$. The convolution can then be written as:
    \begin{equation}
        \conv{\mathcal{K}} \circ p(x) = \foldKernel{\mathcal{K}} u(p(x)) \in (\cout, \heightY \times \widthY).
    \end{equation}

    \emph{Technical compuations}
    Let $x$ an input data, we have:
    \begin{align}
        \frob{\conv{\mathcal{K}} \circ p(x) -  \conv{\widetilde{\mathcal{K}}} \circ p(x)}^2
        &= \frob{\foldKernel{\mathcal{K}} u(p(x)) - \foldKernel{\widetilde{\mathcal{K}}} u(p(x))}^2 \\
        &= \frob{\foldKernelp{\mathcal{K} - \widetilde{\mathcal{K}}} u(p(x)) }^2 \\
        &= \scalarp{\foldKernelp{\mathcal{K} - \widetilde{\mathcal{K}}} u(p(x)) , \foldKernelp{\mathcal{K} - \widetilde{\mathcal{K}}} u(p(x)) }. \\
        \intertext{Using the fact that $\scalarp{u, vw} = \scalarp{u\trans{w}, v}$, we have:}
        &= \scalarp{\foldKernelp{\mathcal{K} - \widetilde{\mathcal{K}}} u(p(x)) \trans{u(p(x))}, \foldKernelp{\mathcal{K} - \widetilde{\mathcal{K}}}  }.
        \label{eq:scalar_product_with_sigma}
    \end{align}

    \emph{Taking the expectation}
    Now we take the expectation over the data distribution $\mathcal{D}$:
    \begin{align}
        \normlp{\conv{\mathcal{K}} \circ p -  \conv{\widetilde{\mathcal{K}}} \circ p}^2
        &= \EE{x \sim \mathcal{D}}{\frob{\conv{\mathcal{K}} \circ p(x) -  \conv{\widetilde{\mathcal{K}}} \circ p(x)}^2}. \\
        \intertext{By linearity of the expectation and of the matrix multiplication and using \eqref{eq:scalar_product_with_sigma}, we have:}
        &= \scalarp{\foldKernelp{\mathcal{K} - \widetilde{\mathcal{K}}} \EE{x \sim \mathcal{D}}{u(p(x)) \trans{u(p(x))}}, \foldKernelp{\mathcal{K} - \widetilde{\mathcal{K}}}  } \\
        &= \scalarp{\foldKernelp{\mathcal{K} - \widetilde{\mathcal{K}}} \Sigma,  \foldKernelp{\mathcal{K} - \widetilde{\mathcal{K}}}} \\
        &= \scalarp{\foldKernelp{\mathcal{K} - \widetilde{\mathcal{K}}} \Sigma^{1/2} \transp{\Sigma^{1/2}},  \foldKernelp{\mathcal{K} - \widetilde{\mathcal{K}}}}. \\
        \intertext{Using the fact that $\scalarp{u\trans{w}, v} = \scalarp{u, vw}$ we have:}
        &= \scalarp{\foldKernelp{\mathcal{K} - \widetilde{\mathcal{K}}} \Sigma^{1/2},  \foldKernelp{\mathcal{K} - \widetilde{\mathcal{K}}}\Sigma^{1/2}} \\
        &= \frob{\foldKernelp{\mathcal{K} - \widetilde{\mathcal{K}}} \Sigma^{1/2}}^2.
    \end{align}
    Hence, we have:
    \begin{equation}
        \normlp{\conv{\mathcal{K}} \circ p -  \conv{\widetilde{\mathcal{K}}} \circ p} = \frob{\foldKernelp{\mathcal{K} - \widetilde{\mathcal{K}}} \Sigma^{1/2}}.
    \end{equation}
        
\end{proof}

\section{Details on ALS Algorithm with Distribution-Aware Norm}

\subsection{Computation of the Square Root of $\Sigma$}

\subsubsection{Definition and Existence}

As we stated in the Proposition \ref{prop:sigmanorm}, we have $\Sigma := \EE{x \sim \mathcal{D}}{u(p(x)) \trans{u(p(x))}}$. In this section, we show that the square root of the matrix \( \Sigma \) can be computed via its Cholesky decomposition. In this end, we need to show that the matrix $\EE{x \sim \mathcal{D}}{u(p(x)) \trans{u(p(x))}}$ is a symmetric positive definite matrix.

First of all, note that \( \Sigma \) is symmetric:
\begin{align}
\Sigma^\top &= \left[ \EE{x \sim \mathcal{D}}{u(p(x)) \trans{u(p(x))}} \right]^\top = \EE{x \sim \mathcal{D}}{[{u(p(x)) \trans{u(p(x))}}]^\top} \\
&= \EE{x \sim \mathcal{D}}{u(p(x)) \trans{u(p(x))}} = \Sigma.
\end{align}
Moreover, \( \Sigma \) is a positive semi-definite matrix, since for any vector \( z \in \mathbb{R}^n \), we have
\begin{align}
z^\top \Sigma z &= z^\top \EE{x \sim \mathcal{D}}{u(p(x)) \trans{u(p(x))}} z \\
&= \EE{x \sim \mathcal{D}}{z^\top u(p(x)) \trans{u(p(x))} z} = \EE{x \sim \mathcal{D}}{\norm{z^\top u(p(x))}_F^2} \geq 0.
\end{align}


To guarantee that \( \Sigma \) admits a Cholesky decomposition, it must be positive definite. This requires that the columns of the matrix $u(p(x))$ are linearly independent and span the space in which they lie.


Thus, the matrix \( \Sigma \) admits a Cholesky decomposition if and only if it is positive definite, which holds when the covariance of \( u(p(x)) \) is full rank. However, in practice, the columns of \( u(p(x)) \) can be linearly dependent, making \( \Sigma \) only positive semi-definite. To ensure numerical stability and enable the Cholesky decomposition, we add a small regularization term \( \epsilon \Id \), where \( \epsilon > 0 \) and \( \Id \) is the identity matrix, to the diagonal of \( \Sigma \) in cases where it is only positive semi-definite. Alternatively, the square root of \( \Sigma \) can be computed using the singular value decomposition (SVD), which remains applicable even when \( \Sigma \) is not full rank.

\subsubsection{Complexity Analysis}

\paragraph{Computation of $\Sigma$}

Let recall the notations, for the case where we want to compute $\Sigma$ at layer $l$ of a network:
\begin{itemize}
    \item $N$ the number of samples used to estimate $\Sigma$
    \item $p$ the function corresponding to the first part of the network computing the layer $l$
    \item $C$ the complexity of one forward pass of the full network
    \item $\heightX[l + 1]$, $\widthX[l + 1]$ the height and width of layer $l + 1$
    \item $\cin$ the number of channels of layer $l$
    \item $\heightK, \widthK$ the height and width of the kernel of the connection from layer $l$ to layer $l + 1$
\end{itemize}

To compute $\Sigma$ for each $x$ we compute:
\begin{itemize}
    \item $p(x)$ which has a complexity bounded by $C$
    \item For each of the $\heightX[l + 1] \widthX[l + 1]$ patches $p_{i,j}$ of size $\cin \times \heightK \times \widthK$ in $p(x)$ (each patch corresponds to the receptive field of a neuron of layer $l + 1$) we compute $\transp{p_{i,j}}p_{i,j}$ which has a complexity of $(\cin \heightK \widthK)^2$ per patch. Then we sum all those matrices to get $u(p(x))\transp{u(p(x))}$. The complexity of this step is $\heightX[l + 1] \widthX[l + 1] (\cin \heightK \widthK)^2$.
\end{itemize}

In total, the complexity of the computation of $\Sigma$ is $O(NC + N\heightX[l + 1] \widthX[l + 1] (\cin \heightK \widthK)^2)$. In comparison, the complexity of forwarding trought the connection from layer $l$ to layer $l + 1$ is $O(\heightX[l + 1] \widthX[l + 1] \cin \cout \heightK \widthK)$ where $\cout$ is the number of channels of layer $l+1$. Thus the computation of $\Sigma$ is at most $\frac{\cin \heightK \widthK}{\cout}$ times the complexity of a classical forward (whith typical values $\cin \approx \cout$ and $\heightK, \widthK$ small, the computation of $\Sigma$ is comparable to the complexity of a few ($9$ for $3 \times 3$ kernels and $49$ for the biggest $7 \times 7$ kernels) forward passes).

\paragraph{Computation of the Square Root of $\Sigma$}

Computing the square root of $\Sigma$ can be done with Cholesky or SVD which require $O((\cin \heightK \widthK)^3)$ operations in both cases. This cost is independant of $N$ thus as long as $N > \cin \heightK \widthK$ the cost of computing $\Sigma$ is higher than the cost of computing its square root.

\paragraph{Conclusion}

In general, the computation of $\Sigma$ and its square root is comparable to at most a few ($9$ for $3 \times 3$ kernels and $49$ for the biggest $7 \times 7$ kernels) forward passes through the network times the number of samples used to estimate $\Sigma$.

\subsection{Details of the CP-ALS-Sigma Algorithm}\label{sec:full_equations_cp_als}
Remember that for a given rank $R$, CP decomposition of a kernel tensor $\mathcal{K} \in \R^{T \times S \times H \times W}$ is given by the following formula:
\begin{align}
    \widetilde{\mathcal{K}} = \sum_{r = 1}^{R} 
    \mU_{r}^{(T)} \otimes \mU_{r}^{(S)} \otimes \mU_{r}^{(H)} \otimes \mU_{r}^{(W)}.
\end{align}
By iterating over the components of the CP decomposition as given in the Section~\ref{sec:cpsigma}, we obtain the following sequence of minimization problems:

\begin{equation}
    \min_{\mU^{(T)}} \frob{\vecc(\gK \Sigma^{1/2} ) - \underbrace{\left(\trans{(\Sigma^{1/2})} (\mU^{(S)} \odot \mU^{(H)} \odot \mU^{(W)}) \otimes \Id(T) \right)}_{\mP^{(T)}} \vecc (\mU^{(T)})},
    \label{eq:UTPT_app}
\end{equation}

 \begin{equation}
   \min_{\mU^{(S)}}  \norm{\vecc(\gK  \Sigma^{1/2} ) - \underbrace{(\trans{(\Sigma^{1/2})} \otimes \Id(T)) \left[(\mU^{(T)} \odot \mU^{(W)} \odot \mU^{(H)}) \otimes \Id(S) \right]}_{=:\mP^{(S)}} \vecc (\mU^{(S)})}_F,
 \end{equation}

 \begin{equation}
  \min_{\mU^{(H)}}  \norm{\vecc(\gK \Sigma^{1/2} ) - \underbrace{(\trans{(\Sigma^{1/2})} \otimes \Id(T)) \left[(\mU^{(T)} \odot \mU^{(S)} \odot \mU^{(W)}) \otimes \Id(H) \right]}_{=:\mP^{(H)}} \vecc (\mU^{(H)})}_F,
 \end{equation}

 \begin{equation}
  \min_{\mU^{(W)}}    \norm{\vecc(\gK \Sigma^{1/2} ) - \underbrace{(\trans{(\Sigma^{1/2})} \otimes \Id(T)) \left[(\mU^{(T)} \odot \mU^{(S)} \odot \mU^{(H)}) \otimes \Id(W) \right]}_{=:\mP^{(W)}} \vecc (\mU^{(W)})}_F,
 \end{equation}

 where we used the property $(\mA \otimes \mC)(\mB \otimes \mC) = (\mA \mB \otimes \mC)$ for \eqref{eq:UTPT_app}. In consequence, the associated algorithm called CP-ALS-Sigma is described in the algorithm \ref{alg:cp-als-sigma}.
 

\subsection{Details of the Tucker2-ALS-Sigma Algorithm}\label{sec:tucker2_als_details}

Remember that Tucker2 decomposition of  $\mathcal{K}\in \R^{T \times S \times H \times W}$ is given by a core tensor \(\mathcal{G} \in \R^{R_T \times R_S \times H \times W}\)  and two factor matrices $\mU^{(T)}\in \R^{T \times R_T}$ and $\mU^{(S)}\in \R^{S \times R_S}$ such that $$
\widetilde{\mathcal{K}} = \mathcal{G} \times_2  \mU^{(S)} \times_1 \mU^{(T)}.\footnote{$\times_i$ indicates a product along the $i$th axis of the tensor $\mathcal{G} $}$$
Similar to CP decomposition above, using the properties of tensor unfolding and vectorization of matrices, we end up with the following minimization problems:

\begin{equation}
    \min_{\mU^{(T)}} \norm{\vecc(\gK \Sigma^{1/2} ) - \underbrace{\left[ \left(\trans{(\Sigma^{1/2})} \trans{(\gG  \times_2 \mU^{(S)})_{(1)}} \right) \otimes \Id(T) \right]}_{=:\mP^{(T)}} \vecc (\mU^{(T)})}_F,
\end{equation}

\begin{equation}
    \min_{\mU^{(S)}} \norm{\vecc(\gK \Sigma^{1/2} ) - \underbrace{ \left(\trans{(\Sigma^{1/2})} \otimes \Id(T) \right) \left[ \trans{(\gG  \times_1 \mU^{(T)})_{(2)}}\otimes \Id(S)    \right]}_{=:\mP^{(S)}} \vecc (\mU^{(S)})}_F,
\end{equation}
and
\begin{equation}
    \min_{\gG} \norm{\vecc(\gK \Sigma^{1/2} ) - \underbrace{\left( \trans{(\Sigma^{1/2})} \otimes \Id(T) \right)  \left[  \mU^{(T)}  \otimes \mU^{(S)}  \otimes \Id(H)  \otimes \Id(W)  \right]}_{=:\mP^{(\gG)}} \vecc (\gG)}_F.
\end{equation}

Thus, we associate the corresponding factor matrix $\mU^{(T)}$ (and similarly for the factor matrix $\mU^{(S)}$ and core tensor $\gG$) with 
\begin{equation}
    \vecc (\mU^{(T)}) \leftarrow (\mP^{(T)})^\dagger \; \vecc({\gK} \Sigma^{1/2} ).
\end{equation}
Consequently, the full algorithm Tucker2-ALS-Sigma is presented in Algorithm~\ref{alg:tucker-als-sigma}.

      

\subsection{Pseudo inverse computation for CP-ALS-Sigma and Tucker2-ALS-Sigma Algorithms }\label{apx:algodetails}
We refer the section \ref{sec:cpsigma} implementing Sigma norm on the CP-ALS algorithm, we obtain the closed form solution for the factor matrix $\mU^{(T)}$ (and similarly for the other factors) with the following formula:
\begin{equation}\label{eqn:P^T}
    \vecc (\mU^{(T)}) \leftarrow (\mP^{(T)})^\dagger \vecc(\gK \Sigma^{1/2} ).
\end{equation}

Since $\mP^{(T)}$ is a matrix of size $TSHW \times RT$, we apply the identity $\mA^{\dagger} = (\mA^{\top} \mA)^{\dagger} \mA^{\top}$ to avoid explicitly forming and storing large intermediate matrices of size $TSHW \times RT$, $TSHW \times RS$, $TSHW \times RH$, and $TSHW \times RW$, which would otherwise be required for computing the factor matrices $\mU^{(T)}$, $\mU^{(S)}$, $\mU^{(H)}$, and $\mU^{(W)}$, respectively.

Then, we can rewrite the above equation \eqref{eqn:P^T} as:
\begin{equation}
    \text{Vec}(\mU^{(T)}) \leftarrow \left((\mP^{(T)})^{\top} \mP^{(T)}\right)^{\dagger} (\mP^{(T)})^{\top} \text{Vec}(\mathcal{K}\Sigma^{1/2}),
\end{equation}
such that $(\mP^{(T)})^{\top} \text{Vec}(\mathcal{K}\Sigma^{1/2})$ is a vector of size $RT$ and $(\mP^{(T)})^{\top} \mP^{(T)}$ is a matrix of size $RT\times RT$.
However, in practice, we observed that the matrix $(\mP^{(T)})^{\top} \mP^{(T)}$ can be ill-conditioned, making the computation of its pseudoinverse numerically unstable. To mitigate this issue, we reformulate the problem as a linear system and solve it using the MINRES (Minimum Residual) method, which is particularly effective for solving symmetric but large-scale systems. Specifically, we define
\[
b = (\mP^{(T)})^{\top} \operatorname{Vec}(\mathcal{K} \Sigma^{1/2}), \quad A = (\mP^{(T)})^{\top} \mP^{(T)},
\]
and solve the system
\[
A\mU = b,
\]
where the solution vector \(\mU\) corresponds to the desired factor matrix. This system is solved efficiently on the GPU using the \texttt{minres} function provided by the CuPy library.

Likewise to CP decomposition, we solve the factor matrices $\mU^{(T)}$ and $\mU^{(S)}$, and the core tensor $\gG$ of the Tucker2 decomposition using the the \texttt{minres} function provided by the CuPy library.


    
\section{Experiments with CP-ALS-Sigma Algorithm}

\subsection{Model Compression and Fine-Tuning with Limited Data Access} \label{subsec:pardata_cpsigma_results}
Refering the Section~\ref{subsec:partialdata}, we consider the case when only a limited amount of data is available, a common situation in many applications. Specifically, we consider the case where only 50,000 images from the ImageNet training set are available to estimate the matrix $\Sigma$ and to fine-tune the model. 

 Here, we conducted the compression algorithms on ResNet18 and GoogLeNet, and we compare the classification accuracy of the compressed model with CP-ALS-Sigma, CP-ALS algorithms for Frobenius norm and its fine-tuned version. For the network compressed using the standard ALS algorithm under the Frobenius norm, we fine-tune it with the Adam optimizer, selecting the optimal learning rate from the range  $10^{-5}$ to $10^{-10}$.
In addition, we selected the following values of $\alpha$: $[0.8, 0.85, 0.9]$ for Resnet18, and
$[0.85, 0.9, 0.95]$ for GoogLeNet. 

According to the results on ResNet18, presented in Figure~\ref{fig:cpsigma-pardata}, the CP-ALS-Sigma algorithm consistently outperforms both the standard CP-ALS algorithm and the fine-tuned compressed models obtained with CP-ALS.
In addition, our experiments on GoogLeNet (see Figure~\ref{fig:cpsigma-pardata}) show that the CP-ALS-Sigma algorithm achieves higher accuracy than the standard CP-ALS while yielding close results to the fine-tuned compressed model obtained via CP-ALS.

\begin{figure}[ht]
    \centering
    
        \includegraphics[width=0.99\linewidth]{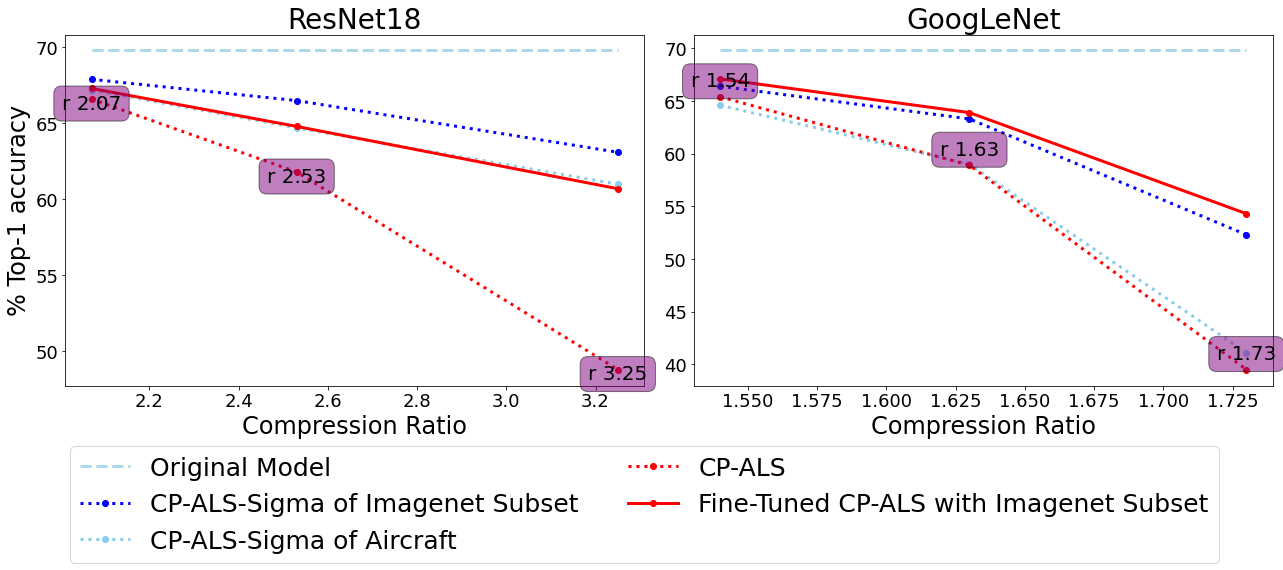}
    
    \caption{Accuracy comparison of decomposed models obtained with CP-ALS-Sigma and CP-ALS algorithms, also including fine-tuned decomposed model (with CP-ALS algorithm) results where fine-tuning done on the subset of ImageNet train dataset.
    }
    \label{fig:cpsigma-pardata}
\end{figure}

\subsection{Impact of Dataset Changes on Proposed Algorithms}\label{subsec:c10_cpsigma_results}

Here, we provide the detailed results for the CP-ALS-Sigma algorithm in the dataset transferability experiments described in Section~\ref{subsec:cifar10exper}. Specifically, we evaluate the compression performance of CP-ALS-Sigma on ResNet18, and GoogLeNet models trained on CIFAR-10, including $\Sigma$ matrices derived from datasets different than the original training set. Our goal is to assess how the choice of dataset for computing the distribution-aware norm affects the final accuracy of the compressed models.

We compare CP-ALS-Sigma against the standard CP-ALS algorithm, and the fine-tuned compressed model obtained with CP-ALS. Fine-tuning was conducted on the CIFAR-10 training set using the Adam optimizer and a range of learning rates from $10^{-3}$ to $10^{-7}$, selecting the best performing rate per model. In addition, we selected parameters $\alpha$ from $[0.85, 0.9, 0.95, 1]$ for Resnet18, and
$[0.9, 0.95, 1]$ for GoogLeNet. 

The results shown in Figure~\ref{fig:cpsigma-c10} indicate that the CP-ALS-Sigma algorithm consistently outperforms the standard CP-ALS algorithm across all models, even when the $\Sigma$ matrix is computed from different datasets, such as a subset of the ImageNet training set or the CIFAR-100 training set. Additionally, as for Tucker2-ALS experiences, while CP-ALS experiences rapid performance degradation as the compression rate increases, our method remains much more consistent. Notably, the CP-ALS-Sigma algorithm achieves similar performance on the CIFAR-100 dataset as it does on CIFAR-10, indicating that the use of a distribution-aware norm for compression is not restricted to the original training dataset. However, when the Sigma matrix is derived from a subset of the ImageNet training set, the performance of the CP-ALS-Sigma algorithm is somewhat less effective compared to when it is computed on CIFAR-10 or CIFAR-100, due to differences in image resolution as observed in Tucker2-ALS-Sigma experiments, see the appendix \ref{sec:DatasetProperties} for further investigation of dataset features that affect algorithm performance.  Moreover, as illustrated in Figure~\ref{fig:cpsigma-c10}, using the CP-ALS-Sigma algorithm on ResNet18 results in only a 0.8\% accuracy drop at a compression rate of 3.53, and a 1.7\% drop at a compression rate of 5.22, relative to the original model accuracy of 94.3\% where $\Sigma$ is computed using the CIFAR-10 train dataset. 

Furthermore, as illustrated in Figure~\ref{fig:cpsigma-pardata}, the compressed model obtained using the CP-ALS-Sigma algorithm, with the Sigma matrix computed from the FGVC-Aircraft training dataset, achieves close classification accuracy to the fine-tuned compressed model obtained through the standard CP-ALS algorithm applied to the original ResNet18.

\begin{figure}[ht]
    \centering
   
        \includegraphics[width=0.99\linewidth]{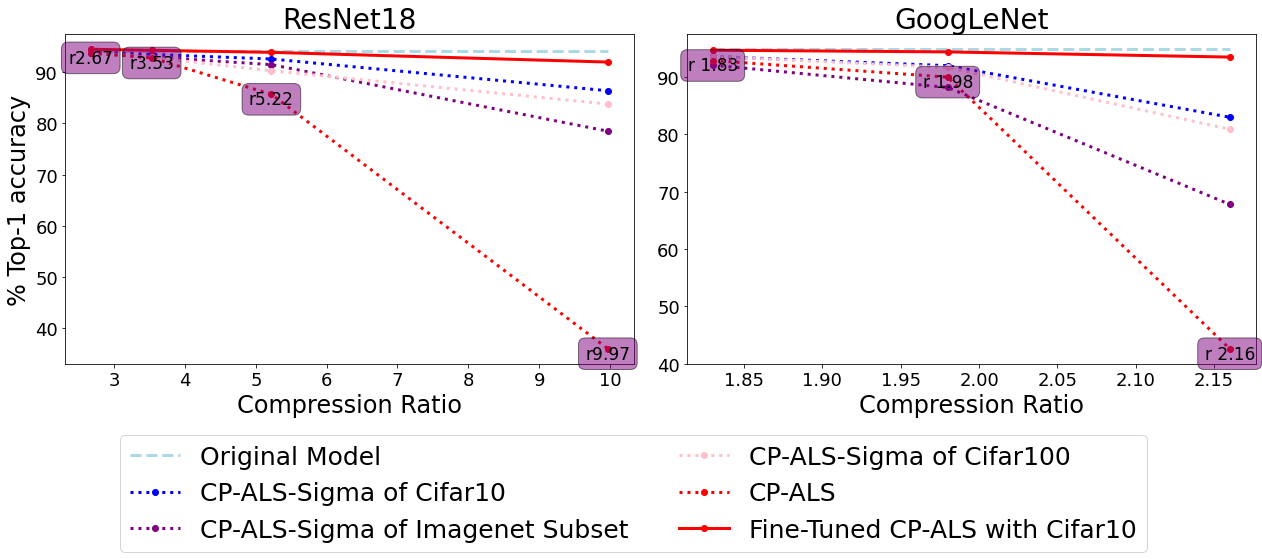}
    
    \caption{Comparison of CP-ALS-Sigma and CP-ALS algorithms including with fine-tuned model after compression with CP-ALS using CIFAR-10 dataset. 
    }
    \label{fig:cpsigma-c10}
\end{figure}

\section{Additional Experiments on the Impact of Dataset Changes Using CIFAR-100 Trained Models}
Building on our previous experiments with CIFAR-10 trained neural network architectures (see the Section~\ref{subsec:cifar10exper}), we continue to assess how variations in the dataset influence compression outcomes and further demonstrate that the distribution-aware norm retains its transferability—even when computed from a dataset different from the one used for pretraining. Specifically, we compress ResNet18, ResNet50, and GoogLeNet models trained on CIFAR-100 using our ALS-Sigma methods—with the $\Sigma$ matrix derived from datasets different from the original train dataset—and compare the results against those of the standard ALS approach under both CP and Tucker decomposition. In addition, we fine-tuned the compressed models that are obtained using the standard ALS algorithms to compare their performance with the Sigma-based methods (without fine-tuning). Fine-tuning was performed on the CIFAR-100 training dataset, employing the Adam optimizer and testing various learning rates from $10^{-3}$ to $10^{-7}$, selecting the best-performing learning rate.

\textbf{Tucker2-ALS-Sigma} Firstly, we compare the results of Tucker2-ALS-Sigma with those of the standard Tucker2-ALS algorithm and fine-tuned compressed model obtained with Tucker2-ALS algorithm.  We selected $\alpha$ from $[0.4,0.5, 0.6, 0.7, 0.8, 0.9, 1]$ for Resnet18, $[0.9, 1, 1.1, 1.2, 1.3]$ for Resnet50, 
$[0.6, 0.7, 0.8, 0.9, 0.95, 1]$ for GoogLeNet. The results, presented in Figure~\ref{fig:t2sigma-c100}, confirm the earlier observations: the Tucker2-ALS-Sigma algorithm consistently outperforms the standard Tucker2-ALS method across all evaluated models. The observed performance difference remains evident even when the $\Sigma$ matrix is derived from datasets other than the one used for training, such as a subset of ImageNet or the CIFAR-10 dataset.

Moreover, we verified again that while Tucker2-ALS causes to significant accuracy degradation as compression rate increases, the Tucker2-ALS-Sigma algorithm maintains significantly more consistent performance. Importantly, Tucker2-ALS-Sigma remains comparable in performance to fine-tuned Tucker2-ALS models—despite not involving any additional fine-tuning. These results reaffirm the transferability and effectiveness of our distribution-aware norm implementation under varying dataset conditions.

\begin{figure}[ht]
    \centering
    
        \includegraphics[width=0.99\linewidth]{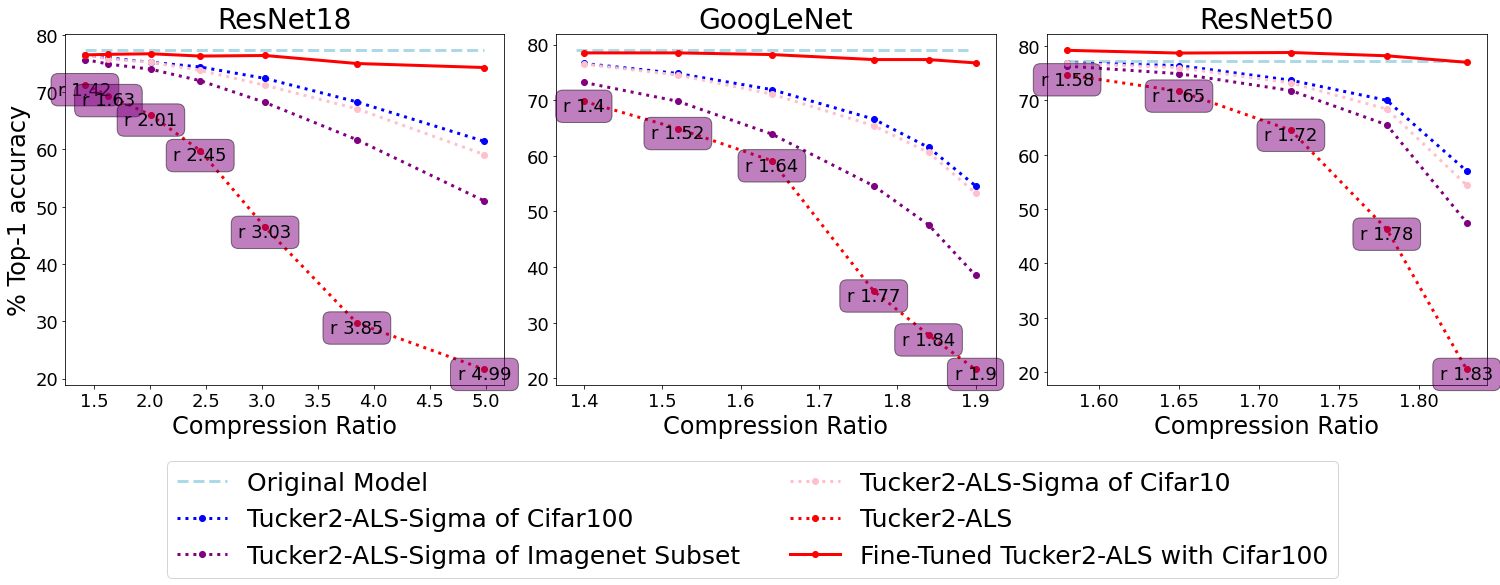}
    
    \caption{Accuracy comparison of decomposed models obtained using the Tucker2-ALS-Sigma and Tucker2-ALS algorithms across ResNet18, GoogLeNet, and ResNet50 architectures. The results also include fine-tuned decomposed models obtained via Tucker2-ALS, where fine-tuning was performed on the CIFAR-100 training dataset.
    }
    \label{fig:t2sigma-c100}
\end{figure}

\textbf{CP-ALS-Sigma}
Here, we present an accuracy comparison of decomposed models obtained using CP-ALS-Sigma, and the standard CP-ALS algorithm, and the fine-tuned compressed model obtained with CP-ALS.
We selected $\alpha$ from $[0.85, 0.9, 0.95, 1]$ both for Resnet18, and GoogLeNet. The results shown in Figure~\ref{fig:cpsigma-c10} indicate that the CP-ALS-Sigma algorithm consistently outperforms the standard CP-ALS algorithm across all models, even when the $\Sigma$ matrix is computed from different datasets, such as a subset of the ImageNet training set or the CIFAR-10 training set. Additionally, as for the Tucker2-ALS experiences, while CP-ALS experiences rapid performance degradation as the compression rate increases, our method remains much more consistent. Notably, we verified again that since the CP-ALS-Sigma algorithm achieves similar performance on the CIFAR-10 dataset as it does on CIFAR-100, the use of a distribution-aware norm for compression is not restricted to the original training dataset.

In addition, when the Sigma matrix is derived from a subset of the ImageNet training set, the performance of the ALS-Sigma algorithms are somewhat less effective compared to when it is computed on CIFAR-10 or CIFAR-100, likely due to differences in image resolution as observed in experiments performed on the CIFAR-10 trained models (see the Sections~\ref{subsec:cifar10exper} and~\ref{subsec:c10_cpsigma_results}).

\begin{figure}[ht]
    \centering
    
        \includegraphics[width=0.99\linewidth]{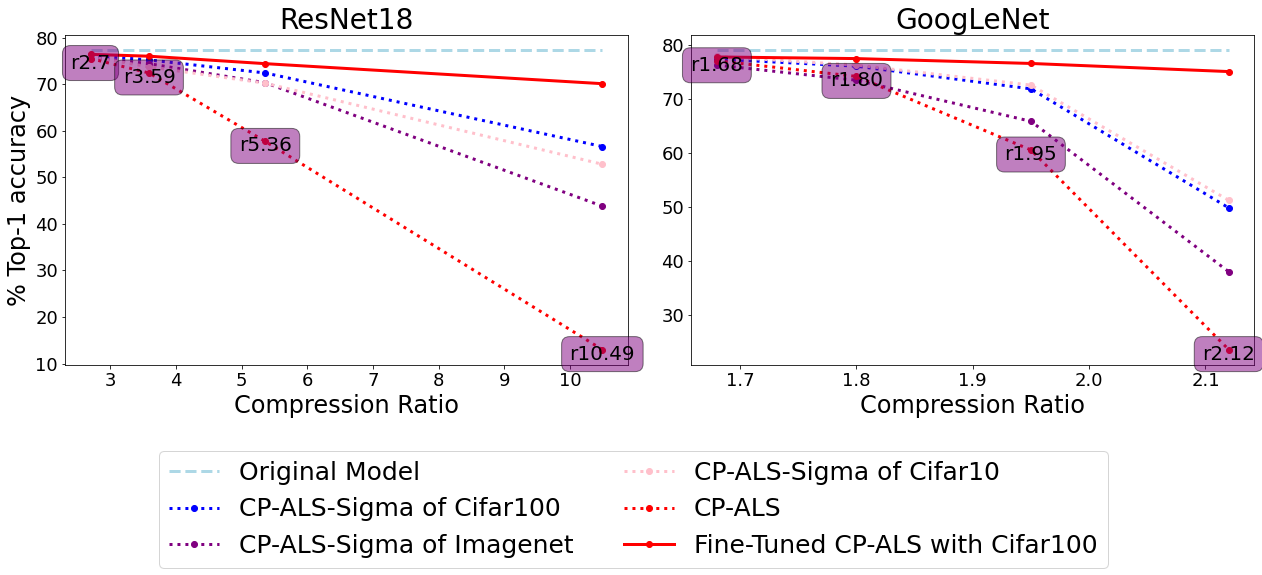}
    
   \caption{Accuracy comparison of decomposed models obtained using the CP-ALS-Sigma and CP-ALS algorithms across ResNet18 and GoogLeNet architectures. The results also include fine-tuned decomposed models obtained via CP-ALS, where fine-tuning was performed on the CIFAR-100 training dataset.}

    \label{fig:cpsigma-c100}
\end{figure}
\section{Additional Experiments on AlexNet}

In this section, we evaluate our proposed algorithms on the AlexNet architecture, extending the compression to the fully-connected layers using Singular Value Decomposition (SVD).

First, we introduce \textbf{SVD-Sigma}, a method that adapts SVD to minimize error with respect to the Sigma norm. The goal is to find a low-rank approximation $\widetilde{\mathcal{W}}$ for a given weight matrix $\mathcal{W}$ by solving the following minimization problem:
\begin{equation}
\label{eq:svd_sigma}
\min_{\widetilde{\mathcal{W}}} \| (\mathcal{W}-\widetilde{\mathcal{W}})\Sigma^{1/2} \|_{F}.
\end{equation}

We then compare the performance of two hybrid compression schemes in Table $\ref{tab:alexnet}$:
\begin{enumerate}
    \item \textbf{Baseline Hybrid:} The convolutional layers are compressed with the standard \textbf{Tucker2-ALS} algorithm, while the linear layers are compressed with conventional \textbf{SVD}.
    
    \item \textbf{Proposed Hybrid:} The convolutional layers are compressed with our \textbf{Tucker2-ALS-Sigma} algorithm, while the linear layers are compressed with the proposed \textbf{SVD-Sigma} method.
\end{enumerate}
\begin{table}[!htbp]
\centering
\caption{Top-1 accuracies of the compressed AlexNet architecture using Tucker2-ALS followed by SVD, and Tucker2-ALS-Sigma followed by SVD-Sigma, where the Sigma matrix is computed using 5,000 images from the ImageNet training dataset. AlexNet original Top-1 accuracy is \% 56.}
\begin{adjustbox}{max width=\textwidth}
\begin{tabular}{|c|c|c|c|}
\hline
\makecell{\textbf{VBMF}\\\textbf{Ratio}} &
\makecell{\textbf{Compression}\\\textbf{Rate}} &
\makecell{\textbf{Tucker2-ALS-Sigma + SVD-Sigma}} &
\makecell{\textbf{Tucker2-ALS + SVD}}  \\ 
\hline
$0.6$ & $1.46$ & $49.8$ & $42.4$   \\
\hline
$0.65$ & $1.63$ & $48.1$ & $40.1$   \\
\hline
$0.7$ & $1.85$  &  $46.1$  & $34.4$ \\
\hline
$0.8$ & $2.54$  & $40.9$ & $25.8$ \\
\hline 
\end{tabular}
\end{adjustbox}

\label{tab:alexnet}
\end{table}

\section{Stability of the Tensor Decomposition Methods}
\label{sec:stability}
We examine the stability of our proposed CP-ALS-Sigma algorithm relative to the standard CP-ALS baseline. Experiments are performed on GoogLeNet trained on ImageNet, CIFAR-10, and CIFAR-100, each evaluated across multiple random seeds to measure the sensitivity of the decompositions to initialization. As shown in Tables \ref{tab:gnetInet-randomseeds}, \ref{tab:gnetc10-randomseeds} and \ref{tab:gnetc100-randomseeds}, CP-ALS is more sensitive to random initialization compared to CP-ALS-Sigma, particularly under high compression rates.

For Tucker2-ALS and Tucker2-ALS-Sigma, we adopt SVD-based initialization, which yields deterministic and thus fully reproducible results.

\begin{table}[!htbp]
\centering
\caption{Performance of CP-ALS and CP-ALS-Sigma on GoogLeNet (ImageNet) with the $\Sigma$ matrix computed with a subset of ImageNet training dataset. Top-1 accuracy results are reported as mean $\pm$ standard deviation, obtained using 5 different random seeds.}
\label{tab:gnetInet-randomseeds}
\begin{tabular}{|c|c|c|c|c|c|}
\hline
\makecell{\textbf{VBMF}\\\textbf{Ratio}} &
\makecell{\textbf{Compr.}\\\textbf{Rate}} &
\textbf{CP-ALS}&
\makecell{\textbf{$\Sigma$ of}\\\textbf{50k-ImageNet}}   \\ 
\hline
$0.85$ & $1.54$ & $65.85 \pm 0.45$  & $66.4 \pm 0.07$ \\
\hline
$0.9$ & $1.63$ & $59.5 \pm 0.58$  & $63.25 \pm 0.05$ \\
\hline
$0.95$ & $1.73$ & $37.43 \pm 2$ & $52.45 \pm 0.12$ \\ 
\hline
\end{tabular}
\end{table}

\begin{table}[!htbp]
\centering
\caption{Performance of CP-ALS and CP-ALS-Sigma on GoogLeNet (trained on CIFAR-10) with the $\Sigma$ matrix computed with CIFAR-10, CIFAR-100, and a subset of ImageNet training datasets. Top-1 accuracy results are reported as mean $\pm$ standard deviation, obtained using 5 different random seeds.}
\label{tab:gnetc10-randomseeds}
\begin{tabular}{|c|c|c|c|c|c|}
\hline
\makecell{\textbf{VBMF}\\\textbf{Ratio}} &
\makecell{\textbf{Compr.}\\\textbf{Rate}} &
\textbf{CP-ALS} &
\makecell{\textbf{$\Sigma$ of}\\\textbf{5k-ImageNet}} &
\makecell{\textbf{$\Sigma$ of}\\\textbf{CIFAR-100}} &
\makecell{\textbf{$\Sigma$ of}\\\textbf{CIFAR-10}}  \\ 
\hline
$0.9$ & $1.54$ & $93.06 \pm 0.15$ &  $92.58 \pm 0.12$ & $93.44 \pm 0.09$ & $93.53 \pm 0.08$ \\
\hline
$0.95$ & $1.63$ & $89.14 \pm 0.62$ & $88.31 \pm 0.24$ & $91.47 \pm 0.06$ & $91.88 \pm 0.04$\\
\hline
$1$ & $1.73$ & $37.71 \pm 3.0$ & $69.86 \pm 0.31$ & $79.94 \pm 0.24$& $82.22 \pm 0.5$\\ 
\hline
\end{tabular}
\end{table}

\begin{table}[!htbp]
\centering
\caption{Performance of CP-ALS and CP-ALS-Sigma on GoogLeNet (trained on CIFAR-100) with the $\Sigma$ matrix computed with CIFAR-10, CIFAR-100, and a subset of ImageNet training datasets. Top-1 accuracy results are reported as mean $\pm$ standard deviation, obtained using 5 different random seeds.}
\label{tab:gnetc100-randomseeds}
\begin{tabular}{|c|c|c|c|c|c|}
\hline
\makecell{\textbf{VBMF}\\\textbf{Ratio}} &
\makecell{\textbf{Compr.}\\\textbf{Rate}} &
\textbf{CP-ALS} &
\makecell{\textbf{$\Sigma$ of}\\\textbf{5k-ImageNet}} &
\makecell{\textbf{$\Sigma$ of}\\\textbf{CIFAR-10}} &
\makecell{\textbf{$\Sigma$ of}\\\textbf{CIFAR-100}}  \\ 
\hline
$0.85$ & $1.68$ & $76.95 \pm 0.32$ &  $77.1 \pm 0.08$ & $77.7 \pm 0.19$ & $77.64 \pm 0.19$ \\
\hline
$0.9$ & $1.80$ & $73.46 \pm 0.48$ &  $74.97 \pm 0.11$ & $76.41 \pm 0.15$ & $76.36 \pm 0.10$ \\
\hline
$0.95$ & $1.95$ & $61.4 \pm 1.22$ & $67.3 \pm 0.38$ & $72.54 \pm 0.09$ & $72.4 \pm 0.17$\\
\hline
$1$ & $2.12$ & $21.25 \pm 1.84$ & $38.59 \pm 0.45$ & $51.28 \pm 0.11$ & $50.74 \pm 0.17$\\ 
\hline
\end{tabular}
\end{table}

\section{Impact of Dataset Properties on Algorithm Performance }\label{sec:DatasetProperties}

\subsection{Image Resolution}
This section investigates the hypothesis that differences in image resolution are a primary factor in cross-dataset performance degradation. Our experiments confirm that algorithm performance is sensitive to resolution matching and the quality of the image scaling process.

First, we found that matching the proxy dataset's resolution (ImageNet) to the target dataset's resolution (CIFAR-10) via downsampling leads to a significant performance recovery. The results in Tables \ref{tab:r18-c10} and \ref{tab:gnet-c10} show that using a $\Sigma$ matrix computed on downsampled ImageNet images consistently outperforms one computed on original-resolution images.

Second, our method is sensitive to the choice of interpolation used during preprocessing. As shown in Table \ref{tab:r18or-interpol}, replacing the default bilinear interpolation (used in other experiments) with bicubic interpolation during the image resizing operation yields better results, underscoring the importance of preserving feature quality during upsampling.

\begin{table}[!htbp]
\centering
\caption{Accuracy comparison of the Tucker2-ALS-Sigma algorithm using Sigma matrices computed from different datasets on a ResNet-18 model trained with CIFAR-10. The Sigma matrices were derived from the full CIFAR-10 and CIFAR-100 training sets, as well as 50,000 ImageNet training images. For the ImageNet (Down) and Aircraft (Down) variants, images were first downsampled to CIFAR-10 resolution ($3 \times 32 \times 32$) and then upsampled to ImageNet resolution ($3 \times 224 \times 224$) before computing the Sigma matrix, whereas for the other variants, images were only upsampled to ImageNet resolution.}
\begin{adjustbox}{max width=\textwidth}
\begin{tabular}{|c|c|c|c|c|c|c|c|}
\hline
\makecell{\textbf{VBMF}\\\textbf{Ratio}} &
\makecell{\textbf{Compr.}\\\textbf{Rate}} &
\makecell{\textbf{$\Sigma$ of}\\\textbf{CIFAR-10}}&
\makecell{\textbf{$\Sigma$ of}\\\textbf{CIFAR-100}}&
\makecell{\textbf{$\Sigma$ of}\\\textbf{ImageNet}\\\textbf{(Orig.)}} &
\makecell{\textbf{$\Sigma$ of}\\\textbf{ImageNet}\\\textbf{(Down)}} &\makecell{\textbf{$\Sigma$ of}\\\textbf{Aircraft}\\\textbf{(Orig.)}} &
\makecell{\textbf{$\Sigma$ of}\\\textbf{Aircraft}\\\textbf{(Down)}}\\
\hline
$0.5$ & $1.6$ & $93.9$ & $93.9$ & $93.7$  & $94$ & $93.1$&$93.6$\\
\hline
$0.6$ & $1.89$ & $93.7$ &  $93.7$  & $93.4$ & $93.7$  & $92.1$ & $93.3$\\
\hline
$0.7$ & $2.25$  & $93.5$ & $93.5$ & $92.7$ & $93.2$ & $89.8$ & $92.4$\\
\hline
$0.8$ & $2.73$ & $93$ & $92.8$ & $91$ & $92.5$ & $86.9$ & $91.1$\\
\hline
$0.9$ & $3.37$ & $91.9$ & $91.7$ & $88.6$ & $91$ & $81.7$ & $88.6$\\
\hline
$1$ & $4.22$  & $90.2$ & $89.5$ & $83.5$ & $87.9$ & $72$ & $82.7$\\
\hline 
\end{tabular}
\end{adjustbox}
\label{tab:r18-c10}
\end{table}

\begin{table}[!htbp]
\centering
\caption{Accuracy comparison of the Tucker2-ALS-Sigma algorithm using Sigma matrices computed from different datasets on a GoogleNet model trained with CIFAR-10. The Sigma matrices were derived from the full CIFAR-10 and CIFAR-100 training sets, as well as 5000 ImageNet training images. 
}
\begin{tabular}{|c|c|c|c|c|c|}
\hline
\makecell{\textbf{VBMF}\\\textbf{Ratio}} &
\makecell{\textbf{Compr.}\\\textbf{Rate}} &
\makecell{\textbf{$\Sigma$ of}\\\textbf{CIFAR-10}} &
\makecell{\textbf{$\Sigma$ of}\\\textbf{CIFAR-100}} &
\makecell{\textbf{$\Sigma$ of}\\\textbf{5k-ImageNet}\\\textbf{(Orig.)}} &
\makecell{\textbf{$\Sigma$ of}\\\textbf{5k-ImageNet}\\\textbf{(Down)}} \\
\hline
$0.6$ & $1.39$ & $94$ & $93.8$ & $92.5$  & $93.4$\\
\hline
$0.7$ & $1.51$ & $93.4$ &  $93.1$  & $90.7$ & $92.3$  \\
\hline
$0.8$ & $1.64$  & $92.6$ & $92$ & $88.1$ & $90.4$ \\
\hline
$0.9$ & $1.76$ & $90.4$ & $89.3$ & $81.7$ & $87$ \\
\hline
$0.95$ & $1.83$ & $88.5$ & $87.1$ & $76.9$ & $83.3$ \\
\hline
$1$ & $1.89$  & $86.3$ & $84.2$ & $70$ & $78.6$ \\
\hline 
\end{tabular}
\label{tab:gnet-c10}
\end{table}

\begin{table}[!htbp]
\centering
\caption{Accuracy comparison of the Tucker2-ALS-Sigma algorithm showing the impact of upsampling quality when using low-resolution datasets for the computation of Sigma matrix on ResNet18 trained with ImageNet. The Sigma matrices were derived from the full CIFAR-10 and CIFAR-100 training sets.}
\begin{adjustbox}{max width=\textwidth}
\begin{tabular}{|c|c|c|c|c|c|}
\hline
\makecell{\textbf{VBMF}\\\textbf{Ratio}} &
\makecell{\textbf{Compr.}\\\textbf{Rate}} &
\makecell{\textbf{$\Sigma$ of}\\\textbf{CIFAR-10}\\\textbf{(Bilinear)}} &
\makecell{\textbf{$\Sigma$ of}\\\textbf{CIFAR-10}\\\textbf{(Bicubic)}} &
\makecell{\textbf{$\Sigma$ of}\\\textbf{CIFAR-100}\\\textbf{(Bilinear)}} &
\makecell{\textbf{$\Sigma$ of}\\\textbf{CIFAR-100}\\\textbf{(Bicubic)}}  \\
\hline
$0.4$ & $1.4$ & $55.4$ & $56.3$ &
$55.6$  & $56.7$ \\
\hline
$0.45$ & $1.5$  &  $52.3$  & $53.8$ & $52.4$  & $53.9$\\
\hline
$0.5$ & $1.63$  & $47.2$ & $50.5$ & $48.3$ & $50.8$ \\
\hline
$0.55$ & $1.77$  & $41.7$ & $46.5$ & $42$ & $47$ \\
\hline 

\end{tabular}
\end{adjustbox}

\label{tab:r18or-interpol}
\end{table}

\subsection{Dataset Diversity}
This section evaluates the impact of dataset diversity on the performance of our proposed algorithms. Our findings indicate that the method is remarkably flexible regarding the specific semantic composition of the data used to compute the $\Sigma$ matrix.

First, we found that broad dataset diversity is beneficial but not essential. The results in Table \ref{tab:r18or-divdata} confirm that a large-scale dataset like ImageNet yields the best performance. Nevertheless, performance remains strong even with smaller, more specialized datasets like FGVC-Aircraft or Oxford Flowers-102.

Furthermore, we investigated whether performance is sensitive to the specific theme of the proxy dataset. As detailed in Table \ref{tab:r18or-divcat}, computing $\Sigma$ matrix from a single broad category (e.g., 'dogs', 'reptiles', or 'aircraft') or from mix of diverse categories has minor impact on the performance, highlighting the method's robustness to the semantic composition of the proxy data.

\begin{table}[!htbp]
\centering
\caption{Accuracy comparison of the Tucker2-ALS-Sigma algorithm using Sigma matrices computed from different datasets on the original ResNet18 architecture. The Sigma matrices were derived from the FGVC-Aircraft and Oxford Flowers-102 training sets, and the 50.000 images of ImageNet.}
\begin{adjustbox}{max width=\textwidth}
\begin{tabular}{|c|c|c|c|c|}
\hline
\makecell{\textbf{VBMF}\\\textbf{Ratio}} &
\makecell{\textbf{Compr.}\\\textbf{Rate}} &
\makecell{\textbf{$\Sigma$ of}\\\textbf{50k-ImageNet}} &
\makecell{\textbf{$\Sigma$ of}\\\textbf{Aircraft}}&
\makecell{\textbf{$\Sigma$ of}\\\textbf{Flowers-102}}\\ 
\hline
$0.4$ & $1.4$ & $66.3$ & $64.2$ & $64.6$   \\
\hline
$0.45$ & $1.5$  &  $65.2$  & $62.4$  & $63.2$\\
\hline
$0.5$ & $1.63$  & $63.9$ & $59.9$ & $61.6$ \\
\hline
$0.55$ & $1.77$  & $61.9$ & $56.5$ & $59.1$  \\
\hline 

\end{tabular}
\end{adjustbox}

\label{tab:r18or-divdata}
\end{table}

\begin{table}[!htbp]
\centering
\caption{Accuracy comparison of the Tucker2-ALS-Sigma algorithm using Sigma matrices computed depending on the categories of ImageNet dataset on the original ResNet18 architecture. We selected the categories Aircraft, Dog, and Reptiles from ImageNet to compute the Sigma matrices and called by ImageNet-Aircraft, ImageNet-Dog, and ImageNet-Reptiles. For ImageNet-MixCateg, we have chosen images from 10 different categories of ImageNet.}
\begin{adjustbox}{max width=\textwidth}
\begin{tabular}{|c|c|c|c|c|c|}
\hline
\makecell{\textbf{VBMF}\\\textbf{Ratio}} &
\makecell{\textbf{Compr.}\\\textbf{Rate}} &
\makecell{\textbf{$\Sigma$ of}\\\textbf{ImageNet-Aircraft}} &
\makecell{\textbf{$\Sigma$ of}\\\textbf{ImageNet-Dog}} &
\makecell{\textbf{$\Sigma$ of}\\\textbf{ImageNet-Reptiles}} &
\makecell{\textbf{$\Sigma$ of}\\\textbf{ImageNet-MixCateg}} \\ 
\hline
$0.4$ & $1.4$ & $65.6$ & $66.1$ & $65.3$ & $66.6$  \\
\hline
$0.45$ & $1.5$  &  $64.3$  & $65$  & $64.2$ & $65.7$\\
\hline
$0.5$ & $1.63$  & $62.6$ & $63.7$ & $62.3$ & $64.4$\\
\hline
$0.55$ & $1.77$  & $60$ & $61.7$ & $60.2$  & $62.6$\\
\hline 

\end{tabular}
\end{adjustbox}

\label{tab:r18or-divcat}
\end{table}
\subsection{Sample Size}
We next investigate the sensitivity of our method to the number of samples used to compute the $\Sigma$ matrix. The results, presented in Tables \ref{tab:r18-samplesize} and \ref{tab:googlenet-samplesize}, indicate that performance is remarkably stable across different sample sizes. This robustness is a key practical advantage, as it shows our algorithm does not require an excessively large dataset for effective covariance estimation.

\begin{table}[!htbp]
\centering
\caption{Performance of Tucker2-ALS-Sigma on ResNet-18 with the $\Sigma$ matrix computed using varying sample sizes. Top-1 accuracy results are reported as mean $\pm$ standard deviation, with each sample subset chosen randomly over 5 different seeds.}
\label{tab:r18-samplesize}
\begin{tabular}{|c|c|c|c|c|c|}
\hline
\makecell{\textbf{VBMF}\\\textbf{Ratio}} &
\makecell{\textbf{Compr.}\\\textbf{Rate}} &
\makecell{\textbf{$\Sigma$ of}\\\textbf{5k-ImageNet}} & \makecell{\textbf{$\Sigma$ of}\\\textbf{10k-ImageNet}} &
\makecell{\textbf{$\Sigma$ of}\\\textbf{20k-ImageNet}} &
\makecell{\textbf{$\Sigma$ of}\\\textbf{50k-ImageNet}}  \\ 
\hline
0.4 & 1.4 & $66.81 \pm 0.02$ & $66.80 \pm 0.03$ & $66.80 \pm 0.02$ & $66.80 \pm 0.02$ \\ 
\hline
0.45 & 1.5 & $66.09 \pm 0.03$ & $66.11 \pm 0.06$ & $66.10 \pm 0.02$ & $66.08 \pm 0.02$ \\ 
\hline
0.5 & 1.63 & $64.88 \pm 0.03$ & $64.90 \pm 0.05$ & $64.91 \pm 0.02$ & $64.89 \pm 0.01$ \\ 
\hline
0.55 & 1.77 & $63.25 \pm 0.05$ & $63.29 \pm 0.07$ & $63.31 \pm 0.03$ & $63.30 \pm 0.01$ \\ 
\hline
\end{tabular}
\end{table}

\begin{table}[!htbp]
\centering
\caption{Performance of Tucker2-ALS-Sigma on GoogLeNet with the $\Sigma$ matrix computed using varying sample sizes. Top-1 accuracy results are reported as mean $\pm$ standard deviation, with each sample subset chosen randomly over 5 different seeds.}
\label{tab:googlenet-samplesize}
\begin{tabular}{|c|c|c|c|c|c|}
\hline
\makecell{\textbf{VBMF}\\\textbf{Ratio}} &
\makecell{\textbf{Compr.}\\\textbf{Rate}} &
\makecell{\textbf{$\Sigma$ of}\\\textbf{5k-ImageNet}} & \makecell{\textbf{$\Sigma$ of}\\\textbf{10k-ImageNet}} &
\makecell{\textbf{$\Sigma$ of}\\\textbf{20k-ImageNet}} &
\makecell{\textbf{$\Sigma$ of}\\\textbf{50k-ImageNet}}  \\ 
\hline
0.6 & 1.33 & $64.17 \pm 0.04$ & $64.13 \pm 0.05$ & $64.16 \pm 0.07$ & $64.17 \pm 0.03$ \\ 
\hline
0.7 & 1.42 & $60.04 \pm 0.03$ & $60.11 \pm 0.06$ & $60.13 \pm 0.04$ & $60.11 \pm 0.04$ \\ 
\hline
0.8 & 1.51 & $52.21 \pm 0.09$ & $52.18 \pm 0.08$ & $52.20 \pm 0.08$ & $52.17 \pm 0.05$ \\ 
\hline
\end{tabular}
\end{table}

\section{Synergy of Tensor Decomposition and Quantization}

Tensor decomposition and quantization are two principal and complementary model compression techniques. While our factorization approach reduces model size by exploiting structural redundancy, quantization enhances efficiency by reducing numerical precision (e.g., to INT8) of model weights. Since these methods target different aspects of compression, combining them is expected to provide additive benefits, as shown in previous work~\citep{guiModelCompressionAdversarial2019}.

To validate this idea further, we applied post-training FP16 and INT8 quantization to our compressed GoogLeNet (trained on CIFAR-10) models. The results in Table~\ref{tab:quantization_results} confirm this strong synergy, particularly for our \textbf{Tucker2-ALS-Sigma} algorithm. Across all compression ratios, applying FP16 quantization to the factorized model results in a negligible change in accuracy. Even with more aggressive INT8 quantization, the performance remains remarkably high, demonstrating that our method is robust to a subsequent reduction in precision. 

\begin{table}[!htbp]
\centering
\caption{Top-1 accuracies of GoogLeNet(trained on CIFAR-10) compressed with \textbf{Tucker2-ALS-Sigma} and \textbf{Tucker2-ALS}. The table compares the performance without quantization (FP32) against post-training quantization (FP16 and INT8) at various compression rates.}
\label{tab:quantization_results}
\vspace{0.5em}
\begin{tabular}{c ccc ccc}
\toprule
& \multicolumn{3}{c}{\textbf{Tucker2-ALS-Sigma}} & \multicolumn{3}{c}{\textbf{Tucker2-ALS}} \\
\cmidrule(lr){2-4} \cmidrule(lr){5-7}
\textbf{Compr. Rate} & \textbf{WO Quant.} & \multicolumn{2}{c}{\textbf{W Quant.}} & \textbf{WO Quant.} & \multicolumn{2}{c}{\textbf{W Quant.}} \\
\cmidrule(lr){3-4} \cmidrule(lr){6-7}
& \textbf{FP32} & \textbf{FP16} & \textbf{INT8} & \textbf{FP32} & \textbf{FP16} & \textbf{INT8} \\
\midrule
1.39 & 93.97 & 93.94 & 93.65 & 92.81 & 92.82 & 92.47 \\
1.51  & 93.43 & 93.39 & 93.19 & 91.16 & 91.18 & 91.24 \\
1.64  & 92.69 & 92.68 & 92.29 & 87.66 & 87.68 & 88.38 \\
1.76  & 90.65 & 90.68 & 90.02 & 76.43 & 76.34 & 77.03 \\
1.83 & 88.77 & 88.79 & 88.19 & 43.57 & 43.56 & 44.31 \\
1.89    & 86.73 & 86.73 & 85.91 & 33.73 & 33.71 & 34.33 \\
\bottomrule
\end{tabular}
\end{table}

While Tucker decomposition is generally robust to post-factorization quantization, the high sensitivity of Canonical Polyadic (CP) decomposition can lead to significant accuracy degradation. To address this challenge, ~\citet{cherniuk2024quantization} recently proposed a quantization-aware framework called ADMM-EPC. Their method uses a novel CP-EPC initialization to produce low-rank factors that are inherently robust to quantization, enabling the successful combination of CP decomposition and numerical precision reduction.

\section{Compute Resources}

To ensure reproducibility, we detail the computational resources used for all experiments:

\begin{itemize}
    \item \textbf{Hardware:} All CP-ALS and Tucker2-ALS experiments were executed exclusively on CPU, using a machine with an \texttt{x86\_64} architecture and 1.5TB of RAM. In contrast, all CP-ALS-Sigma and Tucker2-ALS-Sigma experiments were performed on systems equipped with either an NVIDIA A100 GPU (80GB) or an NVIDIA H100 GPU (100GB), alongside the same CPU and memory configuration.

    \item \textbf{Software Environment:} We used PyTorch 2.6.0 and CUDA 12.4. Experiments were run on a Linux system with Python 3.10.14.
    
    \item \textbf{Execution Time:} The runtime of Tucker2 and CP decompositions varied depending on model size and compression ratio. Fine-tuning phases were run for 30 epochs, taking approximately 1--6 hours per model.
    
\end{itemize}
\section{Extended Background on Compression by Tensor Decomposition}
We detail the implementation of CP and Tucker decompositions for convolutional layers, as introduced in Section~\ref{sec:background}. We recall the convolution $\conv{\mathcal{K}}$ parameterized by a tensor $\mathcal{K}$ of size $(\cout, \cin, \heightK, \widthK)$, which defines a mapping from the space of images $\mathcal{X} \in \R^{\cin \times \heightX \times \widthX}$ to the space of images $\mathcal{Y} \in \R^{\cout \times \heightY \times \widthY}$. Assuming stride 1 and no padding, the output of the convolution at a given spatial location can be written as
\begin{equation}
    \label{eq:convolution}
    \mathcal{Y}[t, y, x] = 
    \myblue{\sum_{s=1}^S}
    \myred{\sum_{h=-h_d}^{h_d}}
    \mygreen{\sum_{w=-w_d}^{w_d}}
    \mathcal{K}[t, \myblue{s}, \myred{h}, \mygreen{w}]
    \mathcal{X}[\myblue{s}, \myred{y + h}, \mygreen{x + w}]
\end{equation}
where $2h_d+1=H$ and $2w_d+1=W$, and the output dimensions are reduced accordingly: $\heightY = \heightX - H + 1$ and $\widthY = \widthX - W + 1$. For a detailed introduction to the CP and Tucker decomposition methods, we refer to~\cite{lebedevSpeedingupConvolutionalNeural2015}. We reproduce some of the key equations from that paper below.

\subsection{CP Decomposition for Convolutional Layer Compression}

Recall that for a rank $R$, CP decomposition of a kernel tensor $\mathcal{K} \in \R^{T \times S \times H \times W}$ is given by the following formula:

\begin{equation}\widetilde{\mathcal{K}} = \sum_{r = 1}^{R} \mU_{r}^{(T)} \otimes \mU_{r}^{(S)} \otimes \mU_{r}^{(H)} \otimes \mU_{r}^{(W)} \approx \mathcal{K} \end{equation}
such that it can be written as:
\begin{equation}
    \label{eq:k_cp_decomposition}
    \widetilde{\mathcal{K}}[t, \myblue{s}, \myred{h}, \mygreen{w}] =
    \sum_{r=1}^{R} 
    \mU^{(T)}_{r}[t] 
    \mU^{(W)}_{r}[\mygreen{w}] 
    \mU^{(H)}_{r}[\myred{h}] 
    \mU^{(S)}_{r}[\myblue{s}].
\end{equation}

As proposed by \citet{lebedevSpeedingupConvolutionalNeural2015}, we replace the original kernel tensor $\mathcal{K}$ with its CP decomposition $\widetilde{\mathcal{K}}$, thereby expressing the convolutional layer as a sequence of 4 successive layers we obtain the followings:
\begin{align}
    \label{eq:kruskal-conv}
    \mathcal{Y}[t, y, x] 
    &= \sum_{r=1}^{R} 
    \mygreen{\sum_{w=-w_d}^{w_d}} 
    \myred{\sum_{h=-h_d}^{h_d}} 
    \myblue{\sum_{s=1}^S} 
    U^{(T)}_{r}[t] 
    U^{(W)}_{r}[\mygreen{w}] 
    U^{(H)}_{r}[\myred{h}] 
    U^{(S)}_{r}[\myblue{s}]
    \mytensor{X}[\myblue{s}, \myred{y + h}, \mygreen{x + w}]\\
    &= \underbrace{\sum_{r=1}^{R} U^{(T)}_{r}[t]
    \underbrace{\left[\mygreen{\sum_{w=-w_d}^{w_d}} U^{(W)}_{r}[\mygreen{w}]
    \underbrace{\left[\myred{\sum_{h=-h_d}^{h_d}} U^{(H)}_{r}[\myred{h}]
    \underbrace{\left[\myblue{\sum_{s=1}^S}  U^{(S)}_{r}[\myblue{s}]\mytensor{X}[\myblue{s}, \myred{y + h}, \mygreen{x + w}]
     \right]}_{\myblue{1\times1 \text{ conv}}}
     \right]}_{\myred{\text{ depthwise conv}}}
    \right]}_{\mygreen{\text{ depthwise conv}}} 
    }_{1\times 1 \text{ convolution}}.
\end{align}

\subsection{Tucker Decomposition for Convolutional Layer Compression}
Remember that given the ranks $R_T$ and $R_S$ corresponding to the mode-1 and mode-2 unfoldings (i.e., the first and second axes) of the kernel tensor $\mathcal{K} \in \R^{T \times S \times H \times W}$, the Tucker2 decomposition of $\mathcal{K}$ is \begin{equation}
   \widetilde{\mathcal{K}} = \mathcal{G} \times_1 \mU^{(T)} \times_2  \mU^{(S)} .
\end{equation} Accordingly, the elementwise representation of the decomposed kernel at a given spatial location is given by:
\begin{equation}
    \label{eq:k_tucker_decomposition}
    \widetilde{\mathcal{K}}[t, \myblue{s}, \myred{h}, \mygreen{w}] =
    \myred{\sum_{r_s=1}^{R_S}} 
    \mygreen{\sum_{r_t=1}^{R_T}} 
    \mathcal{G}[\mygreen{r_t}, \myred{r_s, h, w}]
    \mU^{(T)}[\mygreen{r_t}, t] 
    \mU^{(S)}[\myred{r_s}, \myblue{s}].
\end{equation} 
Similarly, as suggested by \citet{kimCompressionDeepConvolutional2016}, this leads to replace the convolution by a series of the three layers: a $1\times 1$ convolution parameterized by $\mU^{(T)}$, a full convolution parameterized by $\mathcal{G}$ and a $1\times 1$ convolution parameterized by $\mU^{(S)}$. Specifically, this can be expressed as:
\begin{align*}\label{eq:tucker-conv}
    \mytensor{Y}[t, y, x] 
    &= 
    \myblue{\sum_{s=1}^S}
    \myred{\sum_{h=-h_d}^{h_d}
    \sum_{w=-w_d}^{w_d}
    \sum_{r_s=1}^{R_S} }
    \mygreen{\sum_{r_t=1}^{R_T}} 
    \mathcal{G}[\mygreen{r_t}, \myred{r_s, h, w}]
    \mU^{(T)}[\mygreen{r_t}, t] 
    \mU^{(S)}[\myred{r_s}, \myblue{s}]
    \mytensor{X}[\myblue{s}, \myred{y + h}, \myred{x + w}]\\
    &=
    \underbrace{
        \mygreen{\sum_{r_t=1}^{R_T}}
        {\mU}^{(T)}[t, \mygreen{r_t}]
        \left[
            \underbrace{
                \myred{
                    \sum_{h=-h_d}^{h_d}
                    \sum_{w=-w_d}^{w_d}
                    \sum_{r_s=1}^{R_S}
                }
                \mytensor{G}[\mygreen{r_t}, \myred{r_s}, \myred{h}, \myred{w}]
                \left[
                    \underbrace{
                        \myblue{\sum_{s=1}^S}
                        {\mU}^{(S)}[\myblue{s}, \myred{r_s}]
                        \mytensor{X}[\myblue{s}, \myred{h + y}, \myred{w + x}]
                    }_{\myblue{1 \times 1 \text{ conv}}}
                \right]
            }_{\myred{H \times W \text{ conv}}}
        \right]
    }_{\mygreen{1 \times 1 \text{ conv}}}.
\end{align*}

\section{Correlation Between the Reconstruction Error and Accuracy}
\label{sec:Correlation_Between_the_Reconstruction_Error_and_Accuracy}

\begin{figure}[ht]
    \centering
    \includegraphics[width=\linewidth]{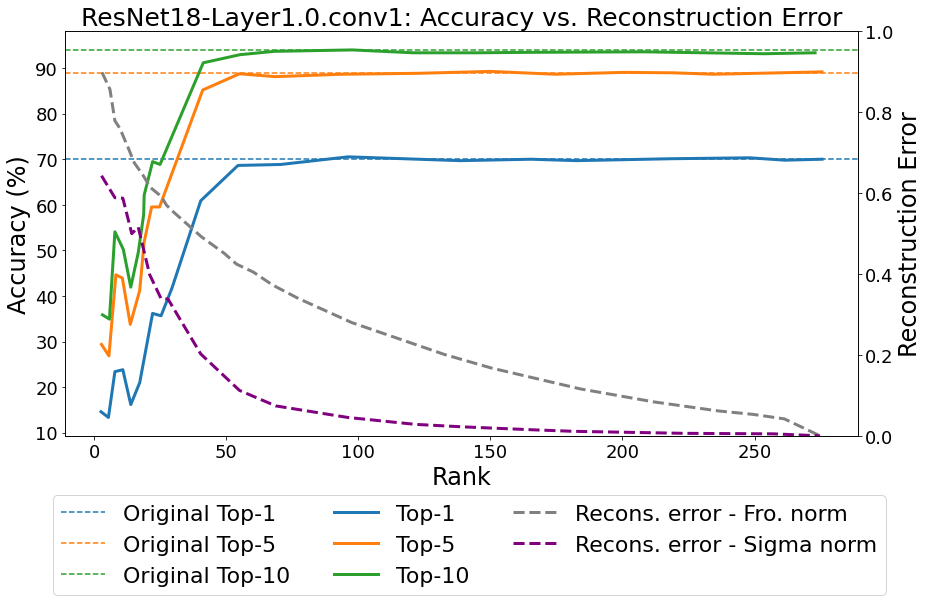}
    \caption{Relative reconstruction error and accuracy with respect to rank of the CP-decomposition of the second convolution of ResNet18.}
    \label{fig:resnet18_layer0_reconstruction}
\end{figure}

In this section we explore the link between reconstruction error and accuracy. To do this we took a neural network trained on ImageNet and we decomposed one layer with the CP decomposition at different ranks. For each compression rank we computed the relative reconstruction error in Frobenius norm and distribution-aware norm. We also computed the accuracy of the model with the decomposed layer. The results are shown in Figure~\ref{fig:resnet18_layer0_reconstruction}. We see that the accuracy of the model is not changing linearly with the rank, instead it first increases rapidly and then saturates. The Frobenius reconstruction error is more smooth and does not show a clear drop in reconstruction error when the accuracy increases. The distribution-aware reconstruction error is a bit chaotic for low-ranks. This is due to the fact that the optimisation is not done for this norm but for the Frobenius norm, hence we don't have the guarantee that the distribution-aware reconstruction error will decrease when the rank increases. We observe that the distribution-aware reconstruction error is inversely correlated with the accuracy of the model. Indeed, for the low ranks, the distribution-aware reconstruction error rapidly decreases until the accuracy saturates. After that, the distribution-aware reconstruction error is still decreasing slowly while the accuracy is not changing. This shows that the distribution-aware reconstruction error is a good metric to measure the quality of the decomposition and that we could use it to choose the rank of the decomposition without having to compute the accuracy of the model. 

The behavior observed in Figure~\ref{fig:resnet18_layer0_reconstruction} is representative of what we consistently observed for the first layers across all the networks tested.
In addition, for the last layers we still have similar structure for the accuracy and the reconstruction error in sigma-norm but the Frobenius norm tends to be way more aligned with the sigma-norm.

\end{document}